\newcommand{\shuffle}{\raisebox{\depth}{\rotatebox{270}{$\exists$}}}
\renewcommand{\hat}[1]{\widehat{#1}}
\newcommand{\vertiii}[1]{{\left\vert\kern-0.25ex\left\vert\kern-0.25ex\left\vert #1 
    \right\vert\kern-0.25ex\right\vert\kern-0.25ex\right\vert}}
\newcommand{\R}{\mathbb{R}}
\newcommand{\Sig}[1]{\mathrm{Sig}{(#1)}}
\newcommand{\dsig}[2]{\varrho^{\mathrm{Sig}}(#1,#2)}
\newcommand{\dsigtrunc}[3]{\varrho_{\leq #3}^{\mathrm{Sig}}(#1,#2)}
\begin{document}

\title{Local Regression on Path Spaces with Signature Metrics}

\author{\name Christian Bayer \email bayerc@wias-berlin.de \\
       \addr Weierstrass Institute (WIAS)\\
       Berlin, Germany\\
       \AND
       \name Davit Gogolashvili \email davit.gogolashvili@wias-berlin.de \\
       \addr Weierstrass Institute (WIAS)\\
       Berlin, Germany\\ 
       \AND
       \name Luca Pelizzari \email 
       luca.pelizzari@univie.ac.at \\
       \addr
       University of Vienna \\
       Vienna, Austria
       }

\editor{My editor}

\maketitle

\begin{abstract}
    We study nonparametric regression and classification for path-valued data. We introduce a functional Nadaraya-Watson estimator that combines the signature transform from rough path theory with local kernel regression. The signature transform provides a principled way to encode sequential data through iterated integrals, enabling direct comparison of paths in a natural metric space. Our approach leverages signature-induced distances within the classical kernel regression framework, achieving computational efficiency while avoiding the scalability bottlenecks of large-scale kernel matrix operations. We establish finite-sample convergence bounds demonstrating favorable statistical properties of signature-based distances compared to traditional metrics in infinite-dimensional settings. We propose robust signature variants that provide stability against outliers, enhancing practical performance. Applications to both synthetic and real-world data—including stochastic differential equation learning and time series classification—demonstrate competitive accuracy while offering significant computational advantages over existing methods. \\
    \textbf{MSC2020 classifications:} 60L10, 60L20, 62G05, 62G08
\end{abstract}

\begin{keywords}
  Local kernel methods, Functional data analysis, Signature transform, Rough paths
\end{keywords}

\tableofcontents

\section{Introduction}
Many supervised learning problems involve \emph{path-valued input data}—observations that take the form of sequential or temporal processes. Examples include financial asset prices evolving over time \citep{bouchaud2018trades}, physiological signals such as EEG or ECG \citep{hannun2019cardiologist,schirrmeister2017deep}, handwritten character trajectories \citep{graves2007unconstrained}, and human action recognition \citep{yang2022developing}. Unlike the classical setting, which typically assumes fixed-dimensional vector inputs, path-valued data present unique challenges: they are often irregularly sampled, vary in length, and may exhibit strong temporal dependencies. 

To address these challenges, one promising approach is the \emph{signature transform}, first developed in \cite{chen1957integration}. It provides a systematic way to extract features from sequential data by encoding a path through its iterated integrals, thereby summarizing essential information in a tensorial form. Crucially, this representation enables direct comparison of sequences of varying size and length. The signature transform further possesses several remarkable properties that make it particularly well-suited for machine learning:
\vspace{-0.2cm}
\begin{itemize}
    \setlength{\itemsep}{0pt}
    \setlength{\parskip}{0pt}
    \item the signature naturally encodes geometrical properties of the path and is invariant under time-reparametrization;
    \item linear functionals of the signature are universal approximators on path space;
    \item for intermeditate to long time series, the signature can provide remarkable compression;
\end{itemize}
\vspace{-0.2cm}
Hence, the use of signatures in statistical learning has received considerable attention. While we refer to the review article by \cite{lyons2022signature} for a broad overview, we focus on discussing the most directly related contributions.
Within the machine learning literature, signatures are most commonly regarded as a feature extraction technique, a perspective that has been applied in a wide range of practical applications. Extending this view, \cite{morrill2020generalised} introduce the Generalised Signature Method, which provides a unifying framework for variations of the signature transform in multivariate time series analysis.

The literature most closely related to our work concerns signature methods for functional regression. In the parametric setting, linear functional regression with signatures—both with and without regularization—has been well-established in the literature \citep{fermanian2021embedding,fermanian2022functional,bleistein2023learning,cohen2023nowcasting,guo2025consistency,bayer2025primal}. 
There, the regression functional is assumed to be linear in the signature with finitely many unknown coefficients, a representation motivated by the Stone–Weierstrass theorem. Fully nonparametric approaches are represented by works on signature kernels \citep{kiraly2019kernels,chevyrev2018signature,lee2023signature,schell2023nonparametric,horvath2023optimal, bayer2025pricing}. While signature kernels can be computed efficiently \citep{kiraly2019kernels,salvi2021signature,lemercier2024log,toth2023random}, these methods inherit the well-known scalability limitations of kernel learning, in particular the computational burden associated with inverting large Gram matrices. The same limitation applies to related Bayesian approaches, such as Gaussian process regression with signature kernel covariance functions \citep{toth2020bayesian}. To address the scalability issue, \citet{lemercier2021siggpde} proposed a sparse variational inference framework for Gaussian processes. Alternatively, signatures can also be used as a feature within a deep learning pipeline \citep{kidger2019deep, moreno2024rough}.  

The field of nonparametric statistics for functional data represents a well-established area of research. The work of \cite{ferraty2006nonparametric} provides a comprehensive treatment of kernel-based methods for functional data analysis, establishing the theoretical foundations for nonparametric regression when the input space is infinite-dimensional. Subsequent developments include convergence rate analysis \citep{lian2012convergence, 10.3150/15-BEJ709}, variable selection methods \citep{aneiros2022variable,shang2014bayesian} and extensions to more complex functional structures \citep{selk2023nonparametric}.

Within the framework of kernel-based functional regression, the choice of \emph{semi-metric} plays a crucial role in determining both theoretical properties and practical performance.  Classical approaches typically rely on $L^p$ metrics or Hölder norms. However, these standard choices often fail to reflect the intrinsic geometric structure of path-valued data. In fact, the associated topologies invariably produce \emph{small-ball probabilities} of exponential form—whether based on the supremum norm, $L_p$, or Hölder norms \citep{ferraty2006nonparametric}—which limits their ability to enhance concentration properties. This motivates the exploration of \emph{semi-metrics}, which can be designed specifically for sequential data and provide a more natural framework for comparing paths.

\paragraph{Contributions.} In this paper, we propose a new estimator that leverages signature transforms within the classical local kernel (or Nadaraya–Watson) regression framework. Our contributions are twofold: (i) we establish rigorous finite-sample convergence guarantees for the proposed estimator, addressing a gap in the theoretical understanding of nonparametric methods based on signatures, and (ii) we provide an efficient and straightforward implementation that avoids the computational and scalability challenges commonly associated with signature kernel approaches, demonstrating its practical applicability on real-world datasets.

The remainder of the paper is organized as follows. Section 2 presents the background material along with the proposed estimator. Section 3 is devoted to our main theoretical results, including the convergence analysis for signature-based Nadaraya-Watson estimators and the derivation of convergence rates. Section 4 provides detailed experimental validation on synthetic and real-world datasets.

\section{Notations and Background}
We consider the problems of regression and classification for data $(X,Y) \in \mathcal{X}\times \mathcal{Y}$, where $\mathcal{X}$ is a (infinite-dimensional) path-space, and $\mathcal{Y}$ is some finite-dimensional Euclidean space. We assume that the data samples are drawn from a joint probability measure $P$ on $\mathcal{X}\times \mathcal{Y}$.

Our central object of interest is the \emph{regression function}, given by
\begin{equation}\label{eq:regression_intro} 
F(x) = \mathbb{E}[Y|X=x] = \int ydP(y|x), \quad x \in \mathcal{X}, \end{equation}
where $P(\cdot | x)$ denotes the conditional distribution of $Y$ given $X = x.$ For instance, in financial modeling one may view the data $X = (X_t)_{t \geq 0} \in \mathcal{X}$ as the dynamics of an asset price. Then, for any \emph{option payoff} $Y = \Psi(X)$, the conditional expectation \eqref{eq:regression_intro} represents its fair price.


For the classification problem, we consider \emph{categorical} responses for the path-valued data, that is we are interested in conditional probabilities 
\begin{equation}\label{eq:classification}
 p_g(x) = \mathbb{P}[Y=g|X=x], \quad (x,g) \in \mathcal{X}\times \mathcal{Y}.
\end{equation} A typical classification example is the handwriting recognition problem, where $\mathcal{Y}$ is the alphabet $\{A,B,\dots,Z\}$, and the data $X\in \mathcal{X}$ may be seen as handwritings of such letters - represented as paths in $\mathbb{R}^2$. The function $p_g$ then assigns the probability of such a handwriting to correspond to the letter $g\in \mathcal{Y}$.

Both the regression and classification problems reduce to learning a conditional expectation, so their treatment follows the same principle, which we now briefly outline. Assume we have i.i.d. (independent and identically distributed) data of input-output pairs
\[
(X^{(1)}, Y^{(1)}),\dots, (X^{(M)},Y^{(M)}) \overset{\text{i.i.d.}}{\sim} P.
\]
Motivated by the classical Nadaraya-Watson estimate \citep{nadaraya1964estimating,watson1964smooth}, and in particular the functional extensions thereof \citep{ferraty2006nonparametric}, we consider the estimator for the regression \eqref{eq:regression_intro}

\begin{equation}\label{eq:NW_estimator_regression}
    \hat{F}(x) =\frac{\sum_{i=1}^MY^{(i)}K(h^{-1}\varrho(x,X^{(i)}))}{\sum_{j=1}^MK(h^{-1}\varrho(x,X^{(j)})}, 
\end{equation} 
where $\varrho$ is a semi-metric\footnote{Satisfying all properties of a metric except for point-separation, i.e. $\varrho(x,y)=0 \nRightarrow x=y$.}
 on the path-space $\mathcal{X}$, $K:\R \to \R_+$ some asymmetric kernel, and $h=h(M)$ is strictly positive. The same estimator can be applied to the classification problem \eqref{eq:classification} by replacing $Y$ with $1_{\{Y=g\}}$, since the latter can be viewed as a regression problem with target $p_g(x) = \mathbb{E}[1_{\{Y=g\}}|X = x]$. It is well-known in the literature (see, e.g. \cite[Chapter 13]{ferraty2006nonparametric}), that compared to the finite-dimensional case, the choice of the semi-metric is very sensible from both a theoretical and practical point of view. In the following section, we introduce variants of the estimator \eqref{eq:NW_estimator_regression}, where the semi-metric $\varrho$ is defined via the \emph{signature transform} of the data, $\mathcal{X} \ni x \mapsto \Sig{x}$; see \eqref{eq:sig_distance}–\eqref{eq:trunc_distance}.

\section{Local regression with signatures}\label{sec:local_sig_regr}

This section presents the main theoretical results of the article, starting with a general convergence analysis of the estimator \eqref{eq:NW_estimator_regression}.
\subsection{Convergence analysis}\label{sec:convergence:analysis}
As is customary in nonparametric regression, we impose smoothness constraints on the regression function $F$. The choice of these constraints is particularly delicate in the infinite-dimensional setting, where the topology of the underlying space $\mathcal{X}$ plays a crucial role in determining both the convergence rates and the practical performance of the estimator.
\begin{definition} \label{Holder class}
    Let $\beta \in (0,1]$ and $L$ be any positive constant. For any semi-metric $\varrho$ on $\mathcal{X}$, we denote by $\mathcal{F}_{\beta}^{\varrho}$ the Hölder class of functionals $F: \mathcal{X} \rightarrow \mathbb{R}$ that satisfy the condition
    \begin{equation}\label{Lipschitz condition}
    \left|F(x)-F\left(x^{\prime}\right)\right| \leq L\varrho (x,x^{\prime})^{\beta},
    \end{equation}
    for all $x, x^{\prime} \in \mathcal{X}$.
\end{definition}
\begin{remark}
   It is important to note that our convergence analysis applies to smoothness levels $\beta \leq 1$. Since the domain $\mathcal{X}$ is not a vector space in general, extending to higher smoothness levels presents significant challenges due to the absence of a natural notion of differentiation. 
\end{remark}

A fundamental quantity in the convergence analysis is the {\em concentration function}, which measures how the data concentrates around a given point in the metric space. 
\begin{definition}
    For any semi-metric $\varrho$ on $\mathcal{X}$ and $x \in \mathcal{X}$, we define the concentration function
\begin{equation}
\label{eq:small_ball}
    \phi^{\varrho}_x(h) = \mathbb{P}[\varrho(X, x) \leq h], \quad h \geq 0.
\end{equation}
\end{definition}

The behaviour of $\phi^\varrho_x(h)$ as $h \downarrow 0$ determines the convergence rates of our estimator. Intuitively, slower decay of $\phi^\varrho_x(h)$ indicates that the data is less dispersed around $x$, leading to better statistical performance. This behavior is intimately connected to the geometry of the path space and the choice of distance function.

In the classical finite-dimensional setting, when $\mathcal{X} = \R^d,$ the concentration function typically exhibits polynomial decay $\mathcal{O}(h^{d})$ regardless of the choice of the metric (as in finite-dimensional spaces all norms are equivalent). However, in infinite-dimensional spaces, the situation is more delicate and depends heavily on the choice of metric. 

The choice of the semi-metric $\varrho$ in Definition \ref{Holder class} is of paramount importance, as it directly influences both the class of admissible functions $\mathcal{F}_{\beta}^{\varrho}$ and the small-ball probability behaviour $\phi^\varrho$ that governs the convergence rates. For brevity, we will often write $\mathcal{F}_\beta = \mathcal{F}_\beta^{\varrho}$ and $\phi_x = \phi_x^{\varrho}$ whenever the choice of $\varrho$ is fixed and clear from the context.

 In the setting of path-valued data, we propose using signature-based distances in Section \ref{sec:sig_metrics}, which naturally respect the geometric structure of the underlying paths. Before doing so, we establish the fundamental convergence rates of the Nadaraya–Watson estimator with respect to \emph{any} semi-metric $\varrho$ on $\mathcal{X}$.


\begin{theorem}\label{main_theorem}
    Let $Y \in [-R,R]$ and let $F \in \mathcal{F}_{\beta}$ with smoothness parameter $\beta \in (0,1]$.
    Consider the estimator $\hat{F}$ defined in \eqref{eq:NW_estimator_regression}, and assume that the kernel $K$ is compactly supported and satisfies
    \begin{equation}\label{thm:kernel}
    b 1_{[0,1]} \leq K \leq B 1_{[0,1]},
    \end{equation}
    for some constants $0 < b \leq B < \infty$. For any $\delta \in (0,1)$ and $M$ satisfying
    \[
    M \geq \frac{16 B \log(6/\delta) }{b \phi_x(h)},
    \]
    with probability at least $1-\delta$ the following bound holds
    \begin{equation}
        |\hat{F}(x) - F(x)| \leq  \frac{B}{b} h^{\beta} + 8R\sqrt{\frac{2 B \log(6/\delta)}{b \phi_x(h) M}}.
    \end{equation}

\end{theorem}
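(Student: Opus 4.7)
The natural approach is the classical bias--variance decomposition for Nadaraya--Watson estimators, adapted to the semi-metric framework. Writing $W_i = K(h^{-1}\varrho(x,X^{(i)}))$, decompose
\begin{equation*}
\hat F(x) - F(x) = \underbrace{\frac{\sum_i (F(X^{(i)}) - F(x))\, W_i}{\sum_j W_j}}_{T_1} + \underbrace{\frac{\sum_i (Y^{(i)} - F(X^{(i)}))\, W_i}{\sum_j W_j}}_{T_2},
\end{equation*}
so that $T_1$ captures the (design-conditional) bias and $T_2$ the stochastic noise. The two are then attacked by deterministic and probabilistic tools respectively.

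The bias is essentially deterministic: condition \eqref{thm:kernel} forces $W_i$ to be supported on the event $A_i \assign \{\varrho(x, X^{(i)}) \leq h\}$, and on $A_i$ the Hölder assumption gives $|F(X^{(i)}) - F(x)| \leq L h^\beta$. Combining $W_i \leq B\mathbf{1}_{A_i}$ in the numerator with $W_j \geq b\mathbf{1}_{A_j}$ in the denominator immediately produces the bias bound $|T_1| \leq (B/b)\, h^\beta$ (times $L$, absorbed into the constant).

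For $T_2$ the argument proceeds conditionally on the design $(X^{(i)})_{i=1}^M$. Since $\mathbb{E}[Y^{(i)} - F(X^{(i)}) \mid X^{(i)}] = 0$ and $|Y|, |F| \leq R$, the numerator is a sum of independent, centered random variables bounded in absolute value by $2R W_i$. Hoeffding's inequality, together with $W_i^2 \leq B W_i$, yields
\begin{equation*}
\mathbb{P}\!\left[ |T_2| > \varepsilon \mid X^{(1)}, \ldots, X^{(M)} \right] \;\leq\; 2 \exp\!\left( - \frac{\varepsilon^2 \sum_j W_j}{8 R^2 B} \right).
\end{equation*}
To turn this into an unconditional bound I need a deterministic lower bound on the random denominator. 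Writing $N = \#\{j : \varrho(x, X^{(j)}) \leq h\} \sim \mathrm{Bin}(M, \phi_x(h))$ and using $\sum_j W_j \geq bN$, a multiplicative Chernoff bound gives $\mathbb{P}[N \leq M\phi_x(h)/2] \leq \exp(-M\phi_x(h)/8)$, and the sample-size hypothesis $M \geq 16 B \log(6/\delta)/(b \phi_x(h))$ forces this failure probability to at most $\delta/3$. On the complementary event $\sum_j W_j \geq bM\phi_x(h)/2$, substituting into the conditional Hoeffding bound and choosing $\varepsilon$ proportional to $R\sqrt{B\log(6/\delta)/(b\phi_x(h)M)}$ gives the stated noise term; a final union bound over the two high-probability events closes the argument.

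The main obstacle is the coupling between numerator and denominator: both are functions of the same sample, so one cannot apply a single concentration inequality to the ratio. The remedy is to first secure a high-probability lower bound for $\sum_j W_j$ via the Chernoff bound on $N$, and only then invoke Hoeffding conditionally for the numerator. The sample-size threshold in the theorem is precisely what is required for both inequalities to yield meaningful rates simultaneously, and it is the coordination of these two scales that drives the shape of the final bound.
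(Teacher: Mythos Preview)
Your argument is correct, but it follows a genuinely different route from the paper's proof. The paper introduces the \emph{population} estimator $F_h(x)=\frac{\int yK(h^{-1}\varrho(x,z))\,dP(y,z)}{\int K(h^{-1}\varrho(x,z))\,dP_X(z)}$ as an intermediate target and writes $|\hat F-F|\leq|\hat F-F_h|+|F_h-F|$; the bias $|F_h-F|\leq(B/b)h^\beta$ is computed at the population level, and the stochastic term $|\hat F-F_h|$ is handled by two applications of \emph{Bernstein's inequality} (one for the normalized denominator $p_M(x)/p_h(x)$, one for the normalized numerator $\tfrac{1}{M}\sum Y^{(i)}K(\cdot)/p_h(x)$), with no conditioning on the design. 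By contrast, you work entirely at the sample level: your bias term $T_1$ is bounded \emph{deterministically} on the event $\{\sum_j W_j>0\}$, and your noise term $T_2$ is treated by first conditioning on $(X^{(i)})$ and applying Hoeffding, then securing the denominator via a multiplicative Chernoff bound on the binomial count $N$. Your route is arguably more elementary (Hoeffding and Chernoff in place of Bernstein) and slightly more economical---the bias costs no probabilistic event, so only two union-bound pieces are needed rather than the paper's three, and in fact your $T_1$ bound could be sharpened to $Lh^\beta$ without the $B/b$ factor. The paper's approach, on the other hand, avoids the conditioning step and is closer to the standard treatment in functional nonparametrics (e.g., Ferraty--Vieu), which may be preferable if one later wants uniform or $L^2$ bounds.
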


The proof of the theorem can be found in Appendix \ref{sec:appendix}. Theorem \ref{main_theorem} provides a finite-sample bound of the pointwise error for the Nadaraya-Watson estimator, which decomposes the estimation error into two fundamental components: the {\em bias} term that is directly controlled by the Hölder parameter $\beta$ from Definition \ref{Holder class}, reflecting how the local smoothness of $F$ around the target point $x$ affects the estimation quality. And the {\em variance} term $\mathcal{O}\left((\phi_x(h)M)^{-1/2}\right),$
that captures the stochastic fluctuations due to finite sample size $M$.

In the Euclidean case, when $\mathcal{X} \subseteq \R^d,$ the concentration function exhibits polynomial decay $\phi_x(h) \sim Ch^d,$ for some constant $C>0.$ Assuming an optimal choice of bandwidth $h \sim M^{-1/(2\beta+d)}$  that balances the bias-variance trade-off, Theorem \ref{main_theorem} yields the classical nonparametric rates of convergence $M^{-\beta/(2\beta+d)}$, that is known to be optimal in the minimax sense \citep{stone1980}.

However, in infinite-dimensional spaces, the situation becomes significantly more delicate and the convergence behavior fundamentally changes. The concentration function typically exhibits exponential behavior $\phi_x(h) \sim C \exp\left(-c h^{-\gamma}\right).$ Gaussian processes provide an illuminating example. For instance, for fractional Brownian motion $X^H=(X^H_t:0\leq t \leq 1)$ we have $\phi_x(h) \sim C\exp(-h^{-2/(2H-\beta)})$ when $\varrho$ is chosen to be the $\beta$-Hölder distance \citep{li2001gaussian}. 

Exponential decay of the concentration function leads to convergence rates that are fundamentally slower than their finite-dimensional counterparts. Specifically, under regularity conditions on the regression function $F \in \mathcal{F}_\beta$ and appropriate choice of bandwidth, one can achieve slow logarithmic type rates $\mathcal{O}(\log(M)^{-2\beta/\gamma}),$ known to be optimal in the minimax sense \citep{10.3150/15-BEJ709}. In the following subsection, we analyze the concentration function behavior for two specific choices of signature-based distances that are particularly relevant for rough path data.

\subsection{Signature semi-metrics on path-spaces}\label{sec:sig_metrics}

Our main focus in the applications below is on \emph{path-valued data}, that is, on spaces $\mathcal{X}$ consisting of paths $x:[0,T]\to E$, where the state space is typically the Euclidean space $E=\mathbb{R}^d$. In this section, we propose a canonical metric choice induced by the signature transform, which offers several theoretical and practical advantages over conventional distances on path spaces.

For simplicity of exposition, we assume in this section that $\mathcal{X} = C^1([0,T],\mathbb{R}^d)$, whereas a more general construction for \emph{rougher data}—namely $\mathcal{X} = C^{p-var}([0,T],\mathbb{R}^d)$ with $p\geq 1$—is discussed in Appendix~\ref{app:signature_intro}. The signature of a path $x \in \mathcal{X}$ is given as a sequence of tensors
\[
\Sig{x} = \left (1,\Sig{x}^{(1)},\dots, \Sig{x}^{(k)}, \dots \right ) \in \prod_{k\geq 0}(\mathbb{R}^d)^{\otimes k},
\] consisting of iterated integrals 
\[
\Sig{x}^{(k)} = \left (\int_0^T \int_0^{t_k} \cdots \int_0^{t_2} dx^{i_1}_{t_1} \cdots dx^{i_k}_{t_k}\right )_{ i_1,\dots,i_k \in [d]},
\]
where $[d] = \{1,\dots,d\}$, see also Definition \ref{def:sig_appendix}.

Among the many fascinating algebraic and analytical properties of the signature transform, some of which we summarize in Appendix \ref{app:signature_intro}, the one most relevant for this article is that the sequence $\operatorname{Sig}(x)$ can be regarded as an \emph{encoding} or \emph{description} of the trajectory. Indeed, the transform $x \mapsto \operatorname{Sig}(x)$ is injective up to some equivalence class $\sim$ (see Appendix \ref{app:signature_intro} for more details), so that the sequence $\operatorname{Sig}(x)$ uniquely characterizes the underlying path $x$. Such equivalence classes become trivial once paths are augmented with time, $x_t \mapsto (t,x_t)$; see \eqref{eq:augmented_space} and the preceding discussion. At the cost of increasing the dimension by one, we henceforth assume that 
\[
   \mathcal{X} = \widehat{C}^1 = \{x_t=(t,x_t): x\in C^{1}([0,T],\R^d)\}.
\]
Injectivity of the signature transform allows us to introduce a \emph{Euclidean-like} distance between paths
\begin{equation}\label{eq:sig_distance}
\varrho^{\mathrm{Sig}}: \mathcal{X} \times \mathcal{X} \rightarrow \mathbb{R}_+, \quad \dsig{x}{y} = \Vert \Sig{x}-\Sig{y} \Vert,
\end{equation} where \[
\Vert \mathbf{a} \Vert = \sqrt{ \sum_{k\geq 0}  \Vert \mathbf{a}^{(k)} \Vert_{(\mathbb{R}^d)^{\otimes k}}^2}, \quad \mathbf{a} \in \prod_{k\geq 0} (\mathbb{R}^d)^{\otimes k},
\]  see Lemma \ref{lem:metric_appendix}. In words, the distance between two data points $x$ and $y$ is measured by comparing their signatures in the \emph{extended tensor algebra} $\mathcal{T} = \prod_{k \geq 0} (\mathbb{R}^d)^{\otimes k}$. In Appendix~\ref{app:signature_intro} we provide a more detailed introduction to the algebraic structure of $\mathcal{T}$, including its product $\otimes$ and addition $+$, as well as further details on its Hilbert space structure.

\begin{remark}\label{rem:factorial}
A more conventional distance on $C^1$ is given by the $1$-variation (or length) of the difference of two paths, namely  
\[
   \varrho^{1}(x,y) = \|x-y\|_{1\text{-var}}, 
   \quad 
   \|x\|_{1\text{-var}} = \int_0^T |\dot{x}_t|\,dt.
\] We know $\Vert \Sig{x}^{(k)}\Vert_{(\mathbb{R}^d)^{\otimes k}} \leq \frac{\|x\|_{1\text{-var}}^k}{k!}$ from Lemma~\ref{lemma:decay}, so that 
the distance in \eqref{eq:sig_distance} is finite.

\end{remark}
A key feature of \eqref{eq:sig_distance} on the infinite-dimensional space $\mathcal{X}$ is that it naturally admits finite-dimensional projections, obtained by truncating the sequence $\operatorname{Sig}(x)$ at some tensor level $N$.
 Supported by the factorial decay noted in Remark~\ref{rem:factorial}, for $N$ large enough the distance \eqref{eq:sig_distance} is well-approximated by its truncated version
\begin{equation}\label{eq:trunc_distance}
    \dsigtrunc{x}{y}{N}
   = \sqrt{ \sum_{n=0}^N \big\| \Sig{x}^{(n)} - \operatorname{Sig}^{(n)}(y) \big\|^2_{(\mathbb{R}^d)^{\otimes n}} }.
\end{equation}
 
\begin{remark}
    Both the truncated and untruncated signature distances are well-supported by publicly available open-source libraries, such as iisignature \citep{reizenstein2018iisignature} or roughpy \citep{morley2024roughpy}. In particular, for the untruncated distance one can exploit its relation to the \emph{signature kernel}
\[
   \dsig{x}{y}^2 = k(x,x) - 2k(x,y) + k(y,y),
\]
where $k$ is obtained by solving a Goursat-type PDE; \citep{salvi2021signature}.

\end{remark}

Returning to the local regression analysis from Section~\ref{sec:convergence:analysis}, we now derive convergence rates for the truncated signature distance. Before doing so, we show in the following lemma that in both the truncated and untruncated cases, the rate of convergence is always at least as fast as under the $1$-variation metric; the proof can be found in Appendix \ref{app:proofs_signature}.

\begin{lemma}\label{lem:small_ball_comp}
    For any $\mathcal{R}>0$ and random variable $X$ in $\mathcal{X}_\mathcal{R}= \{x\in \mathcal{X}: \Vert x \Vert_{1-var} \leq \mathcal{R}\}$, we have \[
    \phi_x^{\varrho^{\mathrm{Sig}}_{\leq N}}(h) \geq \phi_x^{\varrho^\mathrm{Sig}}(h) \geq \phi_x^{\varrho^1}(Ch), \quad \forall x \in B_\mathcal{R},
    \] for some constant $C>0$. 
\end{lemma}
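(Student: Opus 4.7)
The first inequality is essentially by construction: since
\[
    \dsigtrunc{x}{y}{N}^2 = \sum_{n=0}^N \bigl\|\Sig{x}^{(n)} - \Sig{y}^{(n)}\bigr\|^2_{(\R^d)^{\otimes n}} \leq \dsig{x}{y}^2,
\]
the sub-level set $\{y : \dsig{x}{y} \leq h\}$ is contained in $\{y : \dsigtrunc{x}{y}{N} \leq h\}$, so passing to probabilities gives $\phi_x^{\varrho^{\mathrm{Sig}}_{\leq N}}(h) \geq \phi_x^{\varrho^{\mathrm{Sig}}}(h)$.

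The substantive part is the second inequality. The plan is first to establish a Lipschitz-type bound
\[
\dsig{x}{y} \leq C(\mathcal{R}) \|x-y\|_{1\text{-var}}
\]
uniformly over $x,y \in \mathcal{X}_\mathcal{R}$. Once this is available, the set inclusion $\{y \in \mathcal{X}_\mathcal{R} : \|x-y\|_{1\text{-var}} \leq h/C(\mathcal{R})\} \subseteq \{y : \dsig{x}{y} \leq h\}$, combined with the assumption $X \in \mathcal{X}_\mathcal{R}$ almost surely, converts into the desired probability inequality $\phi_x^{\varrho^{\mathrm{Sig}}}(h) \geq \phi_x^{\varrho^1}(Ch)$ with $C = 1/C(\mathcal{R})$.

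To obtain the Lipschitz estimate I would argue level by level via the standard telescoping identity
\[
   \Sig{x}^{(k)} - \Sig{y}^{(k)} = \sum_{j=1}^k \int_{\Delta_k} dx_{t_1}\otimes\cdots\otimes d(x-y)_{t_j}\otimes\cdots\otimes dy_{t_k},
\]
where $\Delta_k = \{0 < t_1 < \cdots < t_k < T\}$ and the increment $d(x-y)$ sits in the $j$-th slot. Applying a mixed-driver version of the factorial decay from Lemma~\ref{lemma:decay} to each of the $k$ summands, together with the uniform $1$-variation bounds $\|x\|_{1\text{-var}}, \|y\|_{1\text{-var}} \leq \mathcal{R}$, yields
\[
\bigl\|\Sig{x}^{(k)} - \Sig{y}^{(k)}\bigr\|_{(\R^d)^{\otimes k}} \leq \frac{\mathcal{R}^{k-1}}{(k-1)!}\,\|x-y\|_{1\text{-var}}.
\]
Squaring and summing over $k \geq 1$ then produces $\dsig{x}{y}^2 \leq \|x-y\|_{1\text{-var}}^2 \sum_{k=0}^\infty \mathcal{R}^{2k}/(k!)^2$, an absolutely convergent series whose sum depends only on $\mathcal{R}$. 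The one non-trivial step is the mixed-driver extension of Lemma~\ref{lemma:decay} together with the telescoping; this is a routine adaptation of the original factorial-decay argument, applied to $k$ distinct driving paths (respectively $x$, $x-y$, or $y$) rather than a single one, and the rest of the proof is then bookkeeping.
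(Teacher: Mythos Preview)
Your argument is correct and reaches the same Lipschitz estimate $\dsig{x}{y}\leq C(\mathcal{R})\,\|x-y\|_{1\text{-var}}$ as the paper, but via a genuinely different route. The paper does not work level by level: instead it uses that $t\mapsto\Sig{x}_t$ solves the linear $\mathcal{T}$-valued ODE $d\Sig{x}_t=\Sig{x}_t\otimes dx_t$, writes
\[
\Vert\Sig{x}-\Sig{y}\Vert\;\leq\;\int_0^T\Vert\Sig{x}_t-\Sig{y}_t\Vert\,|\dot{x}_t|\,dt\;+\;\int_0^T\Vert\Sig{y}_t\Vert\,|\dot{x}_t-\dot{y}_t|\,dt,
\]
bounds the second term by $e^{\Vert y\Vert_{1\text{-var}}}\Vert x-y\Vert_{1\text{-var}}$ using Lemma~\ref{lemma:decay}, and closes with Gr\"onwall to obtain $C_\mathcal{R}=e^{\mathcal{R}}+\mathcal{R}e^{2\mathcal{R}}$. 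Your telescoping-plus-factorial-decay argument is more elementary and yields a somewhat sharper constant; the paper's ODE/Gr\"onwall approach is more compact and aligns with the techniques used later for rough drivers in Section~\ref{sec:RDE}.

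One small caveat: the ``mixed-driver factorial decay'' with a full $1/k!$ is not true in general (take $k=2$ with $z^1,z^2$ supported on disjoint time intervals), so your stated bound $\mathcal{R}^{k-1}/(k-1)!$ is slightly optimistic. The honest route is to factor the $j$-th summand as $\Sig{x|_{[0,t_j]}}^{(j-1)}\otimes d(x-y)_{t_j}\otimes\Sig{y|_{[t_j,T]}}^{(k-j)}$ and apply the single-driver decay to each factor, which gives
\[
\bigl\|\Sig{x}^{(k)}-\Sig{y}^{(k)}\bigr\|\;\leq\;\sum_{j=1}^k\frac{\mathcal{R}^{k-1}}{(j-1)!\,(k-j)!}\,\|x-y\|_{1\text{-var}}\;=\;\frac{(2\mathcal{R})^{k-1}}{(k-1)!}\,\|x-y\|_{1\text{-var}}.
\]
The extra $2^{k-1}$ is harmless: the series $\sum_{k\geq 0}(2\mathcal{R})^{2k}/(k!)^2$ still converges, so your conclusion stands unchanged.
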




To derive convergence rates, we recall from Appendix~\ref{app:signature_intro} that the signature takes values in a free nilpotent Lie group $\mathcal{G} \subset \mathcal{T}$, which will be crucial for the following assumption.

\begin{Assumption}\label{ass:density}
    We assume that $X$ is a random variable taking values in $\mathcal{X}$, and its truncated signature
\[
   \Sig{X}^{\leq N} = \big(1,\,\operatorname{Sig}^{(1)}(X),\,\dots,\,\operatorname{Sig}^{(N)}(X)\big)
\]
admits a density function $p$ with respect to the Haar measure of the Lie group $\mathcal{G}^{\leq N}$, see Definition \ref{def:haar_measure_lie}, which is bounded away from zero, i.e., $p(\mathbf{g}) \geq c > 0$ for all $\mathbf{g} \in \mathcal{G}^{\leq N}$.
\end{Assumption}
\begin{example}\label{ex:Brownian}
    While we restrict to smooth random paths $X$ here for simplicity, Assumption~\ref{ass:density} remains reasonable in rougher frameworks. For instance, it has been shown to hold for Brownian motion $X=B$ already in \cite{kusuoka1987applications}, which is relevant for our application in Section~\ref{sec:SDE_learning}, and has further been extended to fractional Brownian motion $X = B^H$ with $H>1/4$ recently in \cite{baudoin2020density}.

\end{example}
\begin{proposition}\label{prop:truncated_samllball}
    Under Assumption~\ref{ass:density}, the small-ball probability with respect to the truncated signature distance~\eqref{eq:trunc_distance} has at most polynomial decay, in the sense that for any $N \in \mathbb{N}$
\[
   \phi^{\varrho_{\leq N}^{\mathrm{Sig}}}_x(h) = \mathbb{P}\left[\dsigtrunc{X}{x}{N} \leq h \right] 
\geq C h^{\nu(N)}, \quad x \in \mathcal{X},
\]
where $C>0$ is constant and
\[
   \nu(N) = \sum_{n=1}^N\frac{1}{n} \sum_{\ell \mid n} \mu\!\left(\tfrac{n}{\ell}\right) d^{\ell},
\]
and $\mu(\cdot)$ denotes the Möbius function, and the inner sum is taken over divisors of $n$.

\end{proposition}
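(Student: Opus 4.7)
The strategy reduces the small-ball estimate to a volume computation on the finite-dimensional Lie group carrying the truncated signature. Recall from Appendix~\ref{app:signature_intro} that $\operatorname{Sig}^{\leq N}(X)$ takes values almost surely in the step-$N$ free nilpotent Lie group $\mathcal{G}^{\leq N}$ on $d$ generators, a smooth, simply connected submanifold of $\prod_{k=0}^N(\mathbb{R}^d)^{\otimes k}$ whose dimension, by the truncated Witt formula, is exactly $\nu(N)=\dim\mathfrak{g}^{\leq N}$, where $\mathfrak{g}^{\leq N}$ is the free nilpotent Lie algebra of step $N$ on $d$ generators.

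\textbf{Key steps.} First, set $\mathbf{g}_x=\operatorname{Sig}^{\leq N}(x)$ and write
\[
\phi_x^{\varrho^{\mathrm{Sig}}_{\leq N}}(h) = \int_{B_h(\mathbf{g}_x)\cap\mathcal{G}^{\leq N}} p(\mathbf{g})\,d\mu_{\mathrm{Haar}}(\mathbf{g}) \geq c\,\mu_{\mathrm{Haar}}\!\big(B_h(\mathbf{g}_x)\cap\mathcal{G}^{\leq N}\big),
\]
where $B_h(\mathbf{g}_x)$ is the open Euclidean ball of radius $h$ in the ambient truncated tensor algebra and Assumption~\ref{ass:density} is invoked. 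Second, since left-translation $L_{\mathbf{g}_x^{-1}}:\mathbf{h}\mapsto\mathbf{g}_x^{-1}\otimes\mathbf{h}$ is a linear automorphism of the ambient tensor algebra preserving both $\mathcal{G}^{\leq N}$ and $\mu_{\mathrm{Haar}}$ and sending $\mathbf{g}_x$ to the identity $e=(1,0,\dots,0)$,
\[
\mu_{\mathrm{Haar}}\!\big(B_h(\mathbf{g}_x)\cap\mathcal{G}^{\leq N}\big)\geq \mu_{\mathrm{Haar}}\!\big(B_{h/C_{\mathbf{g}_x}}(e)\cap\mathcal{G}^{\leq N}\big),\qquad C_{\mathbf{g}_x}:=\|L_{\mathbf{g}_x^{-1}}\|_{\mathrm{op}}<\infty.
\]
Third, since $\mathcal{G}^{\leq N}$ is simply connected nilpotent, $\exp:\mathfrak{g}^{\leq N}\to\mathcal{G}^{\leq N}$ is a global diffeomorphism (a polynomial map, by Baker--Campbell--Hausdorff) that pushes a constant multiple of Lebesgue measure on $\mathfrak{g}^{\leq N}\simeq\mathbb{R}^{\nu(N)}$ to $\mu_{\mathrm{Haar}}$, and its differential at $0$ is the canonical embedding $\mathfrak{g}^{\leq N}\hookrightarrow\prod_{k=0}^N(\mathbb{R}^d)^{\otimes k}$, whence $\|\exp(v)-e\|\leq c_2\|v\|$ for small $v$. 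The last Haar volume therefore dominates a fixed multiple of the Lebesgue volume of a $\nu(N)$-dimensional Euclidean ball of radius proportional to $h$, yielding the lower bound $\phi_x^{\varrho^{\mathrm{Sig}}_{\leq N}}(h)\geq Ch^{\nu(N)}$.

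\textbf{Main obstacle.} The crux is the passage from the ambient Euclidean ball in $\prod_{k=0}^N(\mathbb{R}^d)^{\otimes k}$ to the intrinsic Haar volume on the curved manifold $\mathcal{G}^{\leq N}$: the ambient metric is not left-invariant, so one must exploit the group structure and track how the auxiliary constants $C_{\mathbf{g}_x}$ and $c_2$ depend on the base point $\mathbf{g}_x$ and on $N$. This is harmless for a pointwise-in-$x$ statement, but a uniform version would require additional tightness. The identification $\dim\mathfrak{g}^{\leq N}=\nu(N)$ is the truncated Witt formula for free Lie algebras, which may be cited.
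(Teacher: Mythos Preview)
Your proposal is correct and follows essentially the same route as the paper's proof: reduce via Assumption~\ref{ass:density} to the Haar volume of an ambient-Euclidean ball in $\mathcal{G}^{\leq N}$, pass to the Lie algebra $\mathfrak{g}^{\leq N}\simeq\mathbb{R}^{\nu(N)}$ via the exponential diffeomorphism and Lemma~\ref{lem:Haar}, and invoke the Witt formula for $\dim\mathfrak{g}^{\leq N}=\nu(N)$. The only differences are cosmetic---you left-translate to the identity before linearising $\exp$ at $0$, whereas the paper pulls back directly via $\log_\otimes$ to the point $\log_\otimes(\operatorname{Sig}^{\leq N}(x))\in\mathfrak{g}^{\leq N}$---and one small slip to fix: for the inclusion in your translation step the relevant operator norm is $\|L_{\mathbf{g}_x}\|_{\mathrm{op}}$ rather than $\|L_{\mathbf{g}_x^{-1}}\|_{\mathrm{op}}$.
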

\begin{remark}
    While the rigorous proof is deferred to Appendix \ref{app:proofs_signature}, we briefly indicate why such a result is natural. Owing to the density assumption, the small-ball probability is bounded below by the volume of balls $B_h$ in $\mathcal{G}^{\leq K}$. We will see that these volumes scale as $h^{\dim(\mathfrak{g}^{\leq K})}$, where $\mathfrak{g}^{\leq N}$ is the Lie algebra associated with $\mathcal{G}^{\leq N}$, whose homogeneous dimension is in fact given by $\dim(\mathfrak{g}^{\leq N})=\nu(N)$, see \cite[Theorem 6]{reutenauer2003free}.

\end{remark}
    As a consequence, we obtain the following non-parametric convergence rate for our estimator \eqref{eq:NW_estimator_regression} with respect to the truncated signature distance, the proof can be found in Appendix \ref{app:proofs_signature}.

\begin{corollary}\label{cor:truncated_rates}
   Let $Y$ be a fixed random variable in $[-R,R]$ and assume $$F(x) = \mathbb{E}[Y| X = x] \in \mathcal{F}^{\varrho}_\beta, \quad \varrho = \varrho_{\leq N}^{\mathrm{Sig}},$$ for some $N\in \mathbb{N}$. Suppose that the kernel $K$ satisfies \eqref{thm:kernel} and  Assumption~\ref{ass:density} holds true, and set $h=M^{-1/(2\beta+\nu(N))}$. For $\delta >0$ and $M$ large enough, we can find a constant $C=C(\delta,R)$ such that 
   \[
      \mathbb{P}\Big[|\hat{F}(x) - F(x)| \leq  C M^{-\frac{\beta}{2\beta+\nu(N)}}\Big ] \geq 1-\delta.
   \]
\end{corollary}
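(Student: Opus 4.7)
The plan is to combine Theorem~\ref{main_theorem} with the small-ball lower bound from Proposition~\ref{prop:truncated_samllball} and then optimize the bandwidth. Since Theorem~\ref{main_theorem} already provides a finite-sample high-probability bound of the form bias~$+$~variance, and Proposition~\ref{prop:truncated_samllball} controls the variance through $\phi_x^{\varrho^{\mathrm{Sig}}_{\leq N}}(h) \geq C h^{\nu(N)}$, the corollary should follow essentially by a calibration argument. This is why I would expect the proof to be short and mostly a book-keeping exercise rather than a new technical step.

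First, I would instantiate Theorem~\ref{main_theorem} with the semi-metric $\varrho = \varrho^{\mathrm{Sig}}_{\leq N}$. Since $F \in \mathcal{F}^{\varrho}_\beta$ and $K$ satisfies \eqref{thm:kernel}, the theorem yields, with probability at least $1-\delta$,
\[
   |\hat F(x) - F(x)| \;\leq\; \tfrac{B}{b}\, h^\beta \;+\; 8R\sqrt{\tfrac{2B\log(6/\delta)}{b\,\phi_x(h)\,M}},
\]
provided $M \geq 16 B\log(6/\delta)/(b\,\phi_x(h))$. Next I would substitute the lower bound $\phi_x(h)\geq C h^{\nu(N)}$ coming from Proposition~\ref{prop:truncated_samllball} (using Assumption~\ref{ass:density}) into both the probability guarantee and the variance term.

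The key step is then the algebraic balancing. With $h = M^{-1/(2\beta+\nu(N))}$, the bias term becomes $\frac{B}{b} M^{-\beta/(2\beta+\nu(N))}$, while
\[
   h^{\nu(N)} M \;=\; M^{\,1 - \nu(N)/(2\beta+\nu(N))} \;=\; M^{\,2\beta/(2\beta+\nu(N))},
\]
so the variance term becomes $8R \sqrt{\tfrac{2B\log(6/\delta)}{bC}}\, M^{-\beta/(2\beta+\nu(N))}$. Hence both terms are of the same order $M^{-\beta/(2\beta+\nu(N))}$ and can be absorbed into a single constant $C(\delta,R)$ depending on $R,\delta,\beta,N,d,b,B$. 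The sample-size condition $M \geq 16 B\log(6/\delta)/(b C h^{\nu(N)})$ becomes a lower bound of the form $M \geq C'(\delta)$, which is exactly the meaning of ``$M$ large enough'' in the statement.

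The only point that requires slight care, and which I see as the main (minor) obstacle, is checking that Proposition~\ref{prop:truncated_samllball} can be applied uniformly in the target $x$ with a constant $C$ independent of $x$; rereading the statement, this is already part of its conclusion (it is stated for arbitrary $x\in\mathcal X$), so no further argument is needed. Everything else is substitution, and no new probabilistic ingredient is required beyond what the excerpt has already developed.
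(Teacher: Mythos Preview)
Your proposal is correct and follows essentially the same approach as the paper: apply Theorem~\ref{main_theorem} with $\varrho = \varrho^{\mathrm{Sig}}_{\leq N}$, plug in the small-ball lower bound $\phi_x(h)\geq C h^{\nu(N)}$ from Proposition~\ref{prop:truncated_samllball}, and balance bias against variance via $h = M^{-1/(2\beta+\nu(N))}$. Your version is in fact slightly more detailed than the paper's, as you explicitly verify the sample-size condition and the $x$-independence of the constant.
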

    For fixed $N$, the previous result yields Euclidean-type nonparametric rates in the infinite-dimensional space $\mathcal{X}$, governed by the effective dimension $\nu(N)$. We emphasize once more that the choice of semi-metric not only affects the convergence rate, but also determines the class of admissible functions $\mathcal{F}_{\beta}^{\varrho}$, which is expected to be much smaller compared to the full signature distance. While it might not always be justified that the underlying functions lie in $\mathcal{F}^\varrho_\beta$, we observe excellent performance when using the truncated signature distance in all our applications. 

\begin{remark}\label{rem:Brownian_signature}
   While we have only considered smooth data in this section, i.e. $\mathcal{X} = \widehat{C}^1$, typical stochastic processes such as Brownian motion have more irregular sample paths. A more suitable framework is the space of paths of finite $p$-variation with $p \geq 2$; see Definition~\ref{def:p_var}. As outlined in Remark~\ref{rem:rough_case}, by exploiting rough path theory \citep{lyons1998differential}, the signature remains well defined using Stratonovich iterated integrals \cite[Chapter 13]{friz2010multidimensional}
   \[
      \Sig{\hat{B}}^{i_1 \cdots i_n}
      = \int_0^T \int_0^{t_n} \cdots \int_0^{t_2}
      \circ d\hat{B}^{i_1}_{t_1} \cdots \circ d\hat{B}^{i_n}_{t_n}.
   \]
 The induced distance $\varrho_{\mathrm{Sig}}$—and the theory developed in this article—remains valid on rough path spaces.

\end{remark}

\subsection{Solution maps of rough differential equations}\label{sec:RDE}
In this section, we briefly excurs into an application of our results to a relevant class of path-valued mappings in stochastic analysis, namely, the solution maps of rough differential equations. This, in particular, means that we now deal with rougher data—more involved than the setting considered so far—namely, rough path spaces \(\mathcal{X} = \mathscr{C}_g^{\alpha}([0,T];\mathbb{R}^d)\). We postpone the proof of the main result to Appendix~\ref{app:proofs_signature} and refer the interested reader to the excellent textbook \cite{friz2020course} for background on this topic.

The mappings of interest are the solution maps $\mathbf{x} \mapsto \mathcal{I}(\mathbf{x}) = Y_T$, where \(Y\) solves the rough differential equation (RDE) \cite[Chapter~8]{friz2020course}
\[
Y_0 \in \mathbb{R}^d, \qquad dY_t = \sigma(Y_t)\,d\mathbf{x}_t, \quad 0 < t \leq T,
\]
with coefficients \(\sigma \in C_b^3\) and geometric rough drivers \(\mathbf{x} \in \mathscr{C}^\alpha_g\) for some \(\alpha \in (1/3, 1/2)\); see \cite[Chapter~3]{friz2020course}. 

For technical reasons, related to the boundedness condition for the targets \(Y\) in Theorem~\ref{main_theorem}, 
we aim to learn $\mathcal{I}$ locally on $$\mathcal{X}_R = \left \{\mathbf{x} \in \mathscr{C}^\alpha:  \vertiii{\mathbf{x}}_{\alpha;[0,T]}\leq R \right \},$$ for some \(R > 0\). 
Moreover, in the main result below we also rely on the enhanced Cameron-Martin space (see, e.g., \cite[Chapter 13.5]{friz2010multidimensional}) \begin{equation}
    \label{eq:CM_space}
    \mathscr{H} = \left \{\mathbf{x}= \Sig{x}^{\leq 2}: x \in \mathcal{H} \right \} \subset \mathscr{C}^\alpha, \quad \mathcal{H} = \left \{ \int_0^\cdot \dot{x}_t dt: \dot{x} \in L^2([0,T];\mathbb{R}^d) \right \}.
\end{equation}
\begin{remark}
    We can randomize the RDE solution using Brownian rough paths $\mathbf{x}=\mathbf{B}(\omega)$ \cite[Chapter 3.2]{friz2020course}\begin{equation}\label{eq:Brownian:RP}t \mapsto {\mathbf{B}}_{0,t} = \left (1, B_{t}, \int_0^t {B}_s \otimes \circ d {B}_s\right ) \in \mathcal{G}^{\leq 2}.\end{equation} Then, the target $Y_T(\omega) = \mathcal{I}(\mathbf{B}(\omega))$ almost surely coincides with the terminal value to the Stratonovich SDE \begin{equation}\label{eq:Strat}
        Y_0=Y_0 \in \mathbb{R}^d, \quad dY_t = \sigma(Y_t) \circ dB_t, \quad 0<t \leq T.
    \end{equation}
    Since we restrict the learning problem to the ball $\mathcal{X}_R$ for some fixed $R>0$, without loss of generality we replace $\mathbf{B}$ with the stopped Brownian rough path $\mathbf{B}^R$ \begin{equation}\label{eq:stopped_RP}
\mathbf{B}^R_{0,t}(\omega)= \left (1,B^R_t, \int_0^tB^R_s\otimes \circ dB_s^R \right )(\omega) \in \mathcal{X}_R, \quad B^R(\omega) = B_{t \land T_R(\omega)}(\omega),
    \end{equation} where $T_R(\omega) = \inf\{t \geq 0 : \vertiii{\mathbf{B}(\omega)}_{\alpha; [0,t]} > R\} \wedge T$.
\end{remark}
Returning to our local regression setting, the previous remark suggests that we can learn $\mathcal{I}$ via regression, when generating i.i.d.\ input–output pairs  
\[
X^{(m)} = \mathbf{B}^{R,(m)}, \qquad Y^{(m)} = \mathcal{I}(\mathbf{B}^{R,(m)}), \quad m = 1, \dots, M,
\]
where, in practice, \( Y^{(m)} \) is obtained by solving the SDE~\eqref{eq:Strat}, for instance via an Euler scheme; see also Section~\ref{sec:SDE_learning}. Similar as before, we define the estimator \begin{equation}\label{eq:estimator_RP}
    \hat{\mathcal{I}}(\mathbf{x})= \frac{\sum_{i=1}^M 
\mathcal{I}(\mathbf{B}^{R,(i)})K(h^{-1}\dsig{\mathbf{x}}{\mathbf{B}^{R,(i)}})}{\sum_{j=1}^M K(h^{-1}\dsig{\mathbf{x}}{\mathbf{B}^{R,(j)}})},
\end{equation} where the signature distance \eqref{eq:sig_distance} can be generalized to the space $\mathcal{X}=\mathscr{C}_g^{\alpha}$; see Appendix \ref{app:signature_intro}.

\begin{theorem}\label{thm:RDE}
 Suppose that $\mathcal{I} \in \mathcal{F}^{\varrho}_{\beta}$ for $\varrho = \varrho_{\mathrm{Sig}}$, where $\mathcal{I}$ is the Itô-Lyons map\[
    \mathcal{I}:\mathcal{X}_R \rightarrow \mathbb{R}, \quad \mathbf{x}|_{[0,T]} \mapsto \mathcal{I}(\mathbf{x}) =Y_T,
    \] for some $R>0$ and assume the kernel $K$ satisfies \eqref{thm:kernel}. Then, for any $\delta \in (0,1)$ and $h = \left (\frac{\log(M)}{\hat{K}}\right )^{{\alpha-
    1/2}}$ for $M$ large enough and some constant $\hat{K}>0$, we can find another constant $C=C(\delta,R)>0$ such that \[
    \mathbb{P}\left [|\mathcal{I}(\mathbf{x})-\hat{\mathcal{I}}(\mathbf{x})| \leq C \log(M)^{-\zeta} \right]\geq 1-\delta, \qquad \zeta = {\beta}({1/2-\alpha}), \quad \mathbf{x} \in \mathcal{X}_{R'} \cap \mathscr{H},  
    \] for any $0 < R' < R$.
\end{theorem}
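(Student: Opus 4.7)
The plan is to apply Theorem~\ref{main_theorem} with the full signature semi-metric $\varrho = \varrho^{\mathrm{Sig}}$, input $X = \mathbf{B}^R$, and response $Y = \mathcal{I}(\mathbf{B}^R)$. First, boundedness of $Y$ by a constant depending on $R$, $\sigma$, and $Y_0$ follows from $\sigma \in C_b^3$, the bound $\vertiii{\mathbf{B}^R}_{\alpha}\leq R$, and standard RDE estimates \cite[Chapter~8]{friz2020course}; the H\"older hypothesis on $\mathcal{I}$ is given. The only nontrivial ingredient is a lower bound on the concentration function $\phi^{\varrho^{\mathrm{Sig}}}_x(h)$, which is the core of the argument.

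I would establish this bound in two steps. First, using a rough-path extension of Lemma~\ref{lem:small_ball_comp}---factorial decay of the rough signature (analog of Lemma~\ref{lemma:decay}) together with local Lipschitz continuity of $\Sig{\cdot}$ on the bounded set $\mathcal{X}_R$---one obtains a constant $C_R>0$ such that
\[
   \dsig{\mathbf{x}}{\mathbf{y}} \;\leq\; C_R\,\vertiii{\mathbf{x}-\mathbf{y}}_{\alpha;[0,T]}, \quad \mathbf{x},\mathbf{y} \in \mathcal{X}_R,
\]
so that $\phi_x^{\varrho^{\mathrm{Sig}}}(h) \geq \phi_x^{\vertiii{\cdot}_\alpha}(h/C_R)$. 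The assumption $R'<R$ ensures that, for $h$ small enough, the H\"older ball of radius $h/C_R$ around any $\mathbf{x} \in \mathcal{X}_{R'}$ lies inside $\mathcal{X}_R$, so the stopping time $T_R$ does not truncate the small-ball event and the small-ball of $\mathbf{B}^R$ coincides with that of $\mathbf{B}$ at those scales. Second, I would invoke Gaussian small-ball theory for Brownian rough paths (see \cite[Chapter~13]{friz2010multidimensional}), combining the classical Baldi--Roynette small-ball for Brownian motion in $\alpha$-H\"older norm with a Cameron--Martin shift along $\mathbf{x}\in\mathscr{H}$, to conclude
\[
   \phi_x^{\vertiii{\cdot}_\alpha}(h) \;\geq\; C_1 \exp\!\left(-\hat{K}_1\, h^{-1/(1/2-\alpha)}\right),
\]
for constants $C_1,\hat K_1>0$ depending on $\|x\|_{\mathscr{H}}$. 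Composing the two estimates yields a lower bound for $\phi_x^{\varrho^{\mathrm{Sig}}}$ with the same exponent $1/(1/2-\alpha)$ up to an enlarged constant.

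With this small-ball bound in hand, substituting $h = (\log M/\hat K)^{\alpha-1/2}$ and choosing $\hat K$ strictly greater than $\hat K_1 C_R^{1/(1/2-\alpha)}$ gives $\phi_x(h) \gtrsim M^{-\eta}$ for some $\eta \in (0,1)$. In particular, $M\phi_x(h)$ grows polynomially, so the sample-size condition of Theorem~\ref{main_theorem} is satisfied for all sufficiently large $M$. Plugging everything in, the bias term evaluates to a constant times $(\log M)^{-\zeta}$ with $\zeta = \beta(1/2-\alpha)$, while the variance term is of order $M^{-(1-\eta)/2}$, polynomially small and hence dominated by the polylog bias. The final constant $C(\delta,R)$ collects the RDE bound on $Y$, the signature-Lipschitz constant $C_R$, the Gaussian small-ball constants $C_1$ and $\hat K_1$, the normalizing kernel constants $B/b$, and the $\log(6/\delta)$ factor from Theorem~\ref{main_theorem}.

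The hard part will be the rough-path small-ball estimate in signature topology: while Cameron--Martin shifts and Baldi--Roynette-type small-ball inequalities are classical in H\"older topology, transferring them to the signature topology requires the uniform Lipschitz estimate on $\mathcal{X}_R$ and, at small scales, a careful handling of the lift from Brownian motion to the Brownian rough path. The two restrictions $\mathbf{x}\in\mathscr{H}$ and $R'<R$ are both essential: the former is needed for the Cameron--Martin density to be well defined and thereby for a nontrivial exponential lower bound, and the latter is needed to eliminate any interaction between the stopping time $T_R$ and the small-ball event as $h\downarrow 0$.
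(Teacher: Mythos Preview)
Your proposal is correct and follows essentially the same route as the paper: reduce to Theorem~\ref{main_theorem}, bound the signature small-ball by the $\alpha$-H\"older small-ball via local Lipschitz continuity of $\mathrm{Sig}$ on $\mathcal{X}_R$, use $R'<R$ to make the stopping time harmless, and then invoke a Cameron--Martin shift plus a Brownian rough-path small-ball estimate before substituting the bandwidth. The only cosmetic differences are that the paper obtains the Lipschitz bound by viewing $t\mapsto\mathrm{Sig}(\mathbf{x})_t$ as the solution of the linear RDE $d\mathrm{Sig}(\mathbf{x})_t=\mathrm{Sig}(\mathbf{x})_t\otimes d\mathbf{x}_t$ and applying \cite[Theorem~8.5]{friz2020course}, and it cites \cite{salkeld2022small} rather than Baldi--Roynette for the small-ball input.
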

The proof can be found in Section \ref{app:proofs_signature}. Let us conclude this section with several remarks.
\begin{remark}
\begin{itemize}
    \item[(i)] It should be noted that the space $\mathscr{H}$ lies dense in $\mathscr{C}^{\alpha}$ with respect to the $\vertiii{\cdot}_{\alpha}$-topology, see  \cite[Theorem 13.55 and Remark 19.4]{friz2010multidimensional}. As a consequence, under the same assumptions as in Theorem \ref{thm:RDE}, for any $\mathbf{x}\in \mathcal{X}_R$ and any $\epsilon \in (0,1)$, we can find $\mathbf{x}_0 \in \mathcal{X}_{R'} \cap \mathscr{H}$ such that $\vertiii{\mathbf{x}-\mathbf{x}_0}_\alpha \leq \epsilon C\log(M)^{-\zeta}$ and \[\mathbb{P}\left [|\mathcal{I}(\mathbf{x})-\hat{\mathcal{I}}(\mathbf{x}_0)| \leq C \log(M)^{-\zeta} \right] \geq \mathbb{P}\left [|\mathcal{I}(\mathbf{x}_0)-\hat{\mathcal{I}}(\mathbf{x}_0)| \leq (1-\epsilon)C \log(M)^{-\zeta} \right] \geq 1-\delta,\] for $M$ large enough.
    \item[(ii)] Let us note that the only condition to be verified in Theorem~\ref{thm:RDE} is that the solution map \(\mathcal{I}\) is Hölder continuous with respect to the signature metric \(\varrho_{\mathrm{Sig}}\) for some \(\beta \in (0,1]\). This is, in particular, satisfied for RDE solutions that admit a signature expansion of the form
\[
    Y_T = \langle \ell, \mathrm{Sig}(\mathbf{x})_T \rangle, 
    \qquad \ell \in (\mathfrak{T}, \|\cdot\|),
\]
see also Appendix~\ref{app:signature_intro}. For RDEs, we refer to \cite[Chapter 20.4.2]{friz2010multidimensional} for  such Taylor expansions in the signature. In the context of SDEs, this relates to stochastic Taylor expansions \cite{arous1989flots,kloeden1991stratonovich}, which have recently been considered in the context of infinite signature expansions in 
\cite{cuchiero2023signature,jaber2024path}, with precise conditions ensuring their convergence.

\item[(iii)] If we replace \(\varrho_{\mathrm{Sig}}\) by the \(\alpha\)-Hölder rough path distance \cite[Definition~2.4]{friz2020course}
\[
    \varrho_\alpha(\mathbf{x}, \mathbf{y}) 
    = |x_0 - y_0| + \vertiii{\mathbf{x} - \mathbf{y}}_{\alpha; [0,T]},
    \qquad \mathbf{x}, \mathbf{y} \in \mathscr{C}^\alpha([0,T]; \mathbb{R}^d),
\] then the condition 
\(\mathcal{I} \in \mathcal{F}_\beta^{\varrho_\alpha}\) is locally satisfied by the Lipschitz continuity of the Itô–Lyons map; see, for example, \cite[Theorem~8.5]{friz2020course}. Moreover, building on the small-ball probability analysis for Gaussian rough paths in \cite{salkeld2022small}, it is also possible to replace the Brownian rough path drivers \(\mathbf{B}\) by more general Gaussian rough paths; see \cite[Chapter~10]{friz2020course}.

\end{itemize}

\noindent
Both the concrete conditions ensuring \(\mathcal{I} \in \mathcal{F}_\beta^{\varrho_{\mathrm{Sig}}}\) and the extensions to broader classes of rough paths and applications, will be addressed in the forthcoming paper \cite{bayer2025nonparametric}.

\end{remark}
%


\subsection{Robustification}\label{sec:robustification}
In our numerical experiments in Section~\ref{sec:applications}, we observe that the estimators $\hat{F}$ are not robust to samples yielding unusually large signature values, leading to outliers in the predictions and unstable performance. Indeed, if a testing sample produces a big signature entry, its distance to a “typical” signature sample becomes very large, and consequently the estimator~\eqref{eq:NW_estimator_regression} predicts a value close to zero. For Brownian signatures, see also Remark \ref{rem:Brownian_signature}, it is not difficult to anticipate this phenomenon, say for $T=1$, since the signature contains the powers
\[
   \int_0^1 \int_0^{t_n} \cdots \int_0^{t_2} \circ d\hat{B}^{2}_{t_1}\cdots \circ d\hat{B}^{2}_{t_n}
   \;=\; \frac{B_1^n}{n},
\]
where $B_1 \sim \mathcal{N}(0,1)$. With small probability (Gaussian tail-estimates), these entries can become arbitrarily large whenever $B_1 \gg 1$. We illustrate and further comment on this observation in Figure \ref{fig:learning_SDE} in Section \ref{sec:SDE_learning}.


We found that this issue can be addressed by adopting the robust signature proposed in \cite{chevyrev2018signature}, which we summarize below; further details are provided at the end of Appendix~\ref{app:proofs_signature}, an explicit construction of $\Lambda$ is provided in Example \ref{ex:tensor_normalization}. The robust signature, following \cite{chevyrev2018signature}, is defined by $\operatorname{RSig}(x) = \Lambda \circ \operatorname{Sig}(x)$,
where $\Lambda$ is a tensor normalization (Definition \ref{def:tensor_normalization})
\[
   \Lambda : \mathcal{T} \;\longrightarrow\; \left\{\mathbf{a} \in \mathcal{T} : \|\mathbf{a}\| \leq R \right\},
\]
for some fixed $R>0$. The map $\Lambda$ is required to be continuous and injective in order to preserve the structural properties of the signature transform. 

\section{Application}\label{sec:applications}
In this section, we test our estimator~\eqref{eq:NW_estimator_regression} in two applications. First, we learn the solution map of stochastic differential equations as a functional of the driving noise, formulated as a nonparametric regression problem. Second, we apply the method to classification tasks on various real-world datasets consisting of sequential data, which we interpret as piecewise linear paths.

All signature computations are performed using the \texttt{iisignature} library \citep{reizenstein2018iisignature}, which provides efficient implementations of signature algorithms with computational complexity $\mathcal{O}(Ld^N)$ where $L$ is the path length, $d$ is the dimension, and $N$ is the truncation level. 
\begin{figure}[ht]
\centering
\includegraphics[width=1\linewidth]{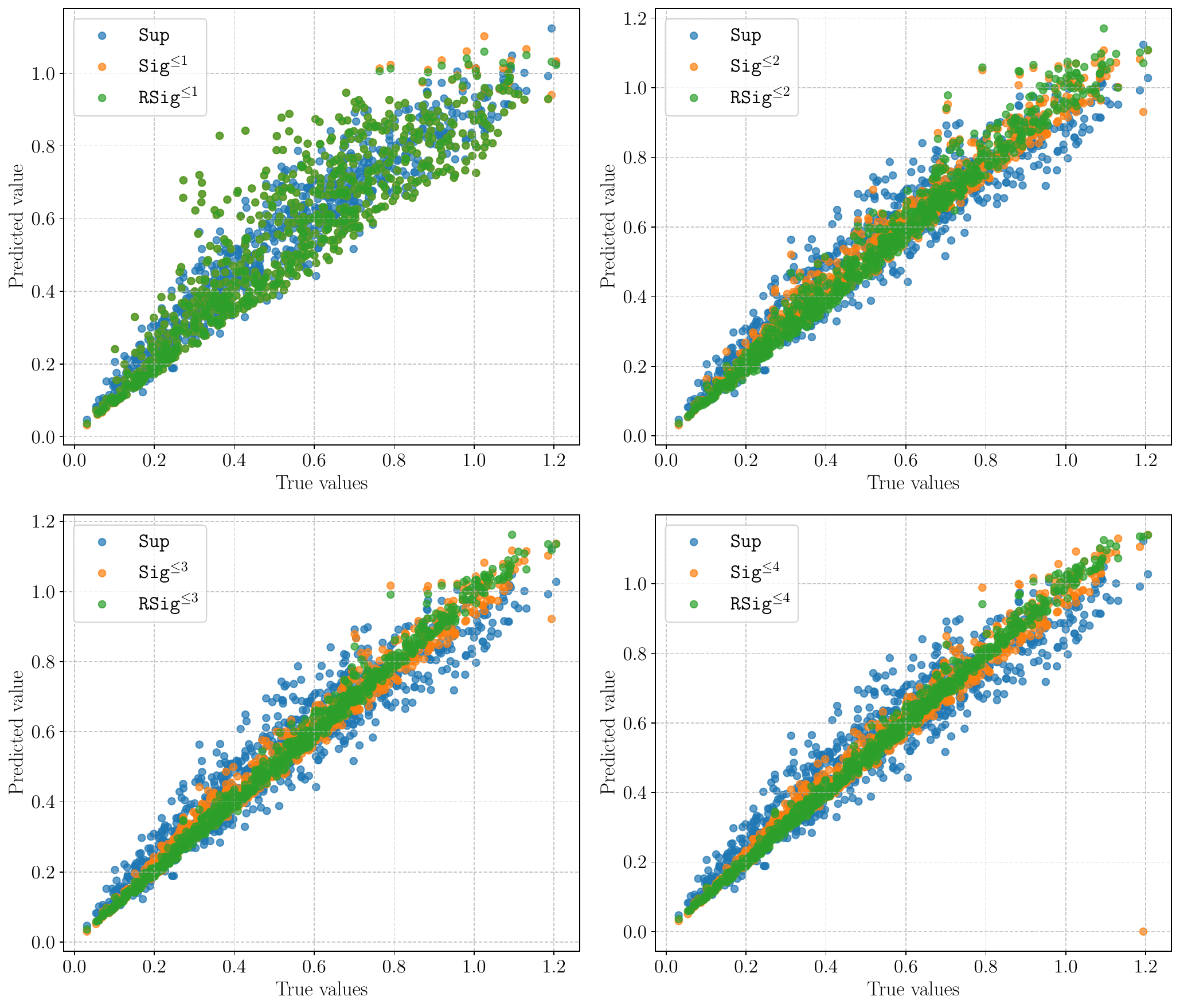}
    \caption{Scatter plots of the testing data $\{(Y^{(m)}, \hat{Y}^{(m)}):m \in I_{te} \}$, using signature and supremum metrics in \eqref{eq:estimator_SDE}. At truncation level $N=4$, the point ${\color{orange} \bullet}$ near $(1.2,0)$ illustrates an outlier of the basic method \texttt{Sig}; see the discussion in Section~\ref{sec:robustification}.}

    \label{fig:learning_SDE}
\end{figure}
Our basic method, which we call \texttt{Sig}, uses the estimator defined in equation \eqref{eq:NW_estimator_regression} with the truncated signature-based semi-metric from equation \eqref{eq:trunc_distance}. Throughout the experiments involving local regression, we use the standard Gaussian kernel. The \texttt{RSig} method uses robust signature features to improve stability and discriminative power. For each time series, we compute truncated signatures up to level $N$ and apply a robust transformation $\Lambda \circ \operatorname{Sig}$, where the normalization map $\Lambda$ rescales each signature tensor according to its magnitude (see Appendix~\ref{app:signature_intro}, Example~\ref{ex:tensor_normalization}). This rescaling reduces sensitivity to outliers by dampening the influence of large signature components \citep{chevyrev2018signature}.

We select hyperparameters (bandwidth $h$, robust parameters $C$ and $a$) via cross-validation. The signature level is set according to the time series dimension (capped at 5), and the bandwidth and robustness parameters are selected from predefined grids.

\subsection{Learning the solution map of SDEs}\label{sec:SDE_learning}
Let $(B_t)_{t\in [0,T]}$ be an $m$-dimensional Brownian motion, and consider its time-augmentation $\hat{B}_t=(t,B_t)$. We are interested in the \emph{Itô-map} $\hat{B} \mapsto Z_T$, where $$Z_0 = z_0, \quad dZ_t= b(Z_t)dt + \sigma(Z_t)dB_t, \quad 0<t \leq T,$$ with coefficients $b:[0,T]\times \mathbb{R}^d \rightarrow \mathbb{R}^d$ and $\sigma:[0,T] \times \mathbb{R}^d \rightarrow \mathbb{R}^{d \times m}$ sufficiently regular such that a unique strong solution exists, see, e.g., \cite[Chapter 3, Theorem 7]{P05}.              

For the numerical experiments in this section, we fix $m=d=1$ and consider the smooth coefficients
\[
   b(x) = -x^{p}, 
   \quad 
   \sigma(x) = x \cos(x), \quad p \in \mathbb{N},
\] and choose $p=5$. For some $M \in \mathbb{N}$, we draw $B^{(1)},\dots,B^{(M)}$ independent Brownian sample paths on some grid $\{t_0,\dots,t_L\}$ in $[0,T]$, and denote by $Y^{(m)}$ the terminal values $Y^{(m)}= Z^{(m)}_T$, obtained using an Euler-Mayurama scheme. We split the data into disjoint training and testing sets $I_{tr} \dot{\cup} I_{te}= \{1,\dots,M\}$ and for $m\in I_{te}$ consider
\begin{equation}\label{eq:estimator_SDE}
   \hat{Y}^{(m)}=
   \frac{\sum_{i\in I_{tr}} Y^{(i)} K\!\left(h^{-1}\varrho(\hat{B}^{(m)},\hat{B}^{(i)})\right)}
         {\sum_{j \in I_{tr}} K\!\left(h^{-1}\varrho(\hat{B}^{(m)},\hat{B}^{(j)})\right)}.
\end{equation}

In Figure~\ref{fig:learning_SDE} we plot the testing data $\{(Y^{(m)},\hat{Y}^{(m)}): m\in I_{te}\}$, choosing $M=2^{13}$ and a $90\%-10\%$ split, i.e. $M_{tr} = |I_{tr}|=0.9 \times M$ and $M_{te} = |I_{te}| = 0.1\times M$. The estimator in \eqref{eq:estimator_SDE} is evaluated using  $\texttt{Sig}$ and $\texttt{RSig}$ at increasing truncation levels, as well as \texttt{Sup}, which uses the conventional supremum distance $\rho_{\mathrm{sup}}(x,y) = \sup_t |x_t - y_t|$. Although both signature-based distances visibly outperform the supremum distance, we observe that $\texttt{Sig}$ is sensitive to outliers, see the discussion in Section \ref{sec:robustification}. 

\begin{table*}[t]
\centering
\resizebox{\textwidth}{!}{%
\begin{tabular}{lrrrrrrrrrrr} \toprule $M_{tr}$ & $\texttt{RSig}^{\leq 2}$ & $\texttt{Sig}^{\leq 2}$ & $\texttt{RSig}^{\leq 3}$ & $\texttt{Sig}^{\leq 3}$ & $\texttt{RSig}^{\leq 4}$ & $\texttt{Sig}^{\leq 4}$ & \texttt{$2$-var} & \texttt{$4$-var} & $L^1$ & $L^2$ & $\texttt{Sup}$ \\ \midrule 8 & 0.115 & 0.146 & 0.114 & 0.163 & 0.114 & 0.172 & 0.206 & 0.323 & 0.197 & 0.190 & 0.177 \\ 16 & 0.088 & 0.130 & 0.090 & 0.150 & 0.090 & 0.160 & 0.190 & 0.161 & 0.211 & 0.219 & 0.166 \\ 32 & 0.060 & 0.111 & 0.061 & 0.128 & 0.061 & 0.138 & 0.179 & 0.213 & 0.183 & 0.164 & 0.154 \\ 64 & 0.062 & 0.126 & 0.126 & 0.127 & 0.071 & 0.135 & 0.180 & 0.142 & 0.175 & 0.160 & 0.139 \\ 128 & 0.054 & 0.104 & 0.051 & 0.111 & 0.051 & 0.119 & 0.164 & 0.129 & 0.141 & 0.141 & 0.121 \\ 256 & 0.047 & 0.056 & 0.042 & 0.060 & 0.039 & 0.069 & 0.167 & 0.128 & 0.124 & 0.121 & 0.104 \\ 512 & 0.042 & 0.055 & 0.033 & 0.064 & 0.032 & 0.061 & 0.163 & 0.124 & 0.111 & 0.106 & 0.093 \\ 1024 & 0.041 & 0.051 & 0.029 & 0.050 & 0.027 & 0.050 & 0.164 & 0.093 & 0.104 & 0.097 & 0.088 \\ 2048 & 0.040 & 0.050 & 0.026 & 0.048 & 0.025 & 0.048 & 0.165 & 0.087 & 0.100 & 0.091 & 0.085 \\ 4096 & 0.039 & 0.040 & 0.023 & 0.035 & 0.021 & 0.042 & 0.164 & 0.080 & 0.098 & 0.087 & 0.083 \\ 8192 & 0.038 & 0.040 & 0.021 & 0.034 & 0.019 & 0.041 & 0.165 & 0.074 & 0.098 & 0.084 & 0.083 \\ \midrule time & 15.29 & 0.86 & 18.68 & 1.25 & 23.94 & 2.20 & 1840.87 & 1704.02 & 24.85 & 22.52 & 33.54 \\ \bottomrule \end{tabular}%
}
\caption{SDE regression accuracy (RMSE) on the testing data. 
The last row corresponds to the evaluation time (in seconds) required 
for the whole procedure for the largest training sample size $M_{tr}=8192$. 
Note that we do not include $1$-var here, since almost surely $\Vert B \Vert_{1-var}=+\infty$.}
\label{tab:regression_results}
\end{table*}

Finally, Table~\ref{tab:regression_results} illustrates the convergence studied in Theorem~\ref{main_theorem} and Corollary~\ref{cor:truncated_rates} as the number of samples $M$ increases. In addition to \texttt{Sig}, \texttt{RSig} and \texttt{Sup}, we also include the methods \texttt{$p$-var} and $\texttt{L}^p$ based on the metrics induced by classical $L^p$- and $p$-variation norms (see \eqref{eq:p_var_norm} in Appendix \ref{app:signature_intro}). For each training sample size, the table reports the root mean-squared error (RMSE)
evaluated on the independent testing set of size $M_{te}= |I_{te}| =8192$. The last row additionally presents the maximal running time (in seconds) of the procedure, including the hyperparameter optimization and the evaluation of the RMSE for each method for tha largest training sample size $M_{tr}=8192$.

The results clearly show that \texttt{Sig} and \texttt{RSig} substantially outperform the estimators based on conventional metrics, both in terms of accuracy and computational efficiency. Among the signature methods, \texttt{RSig} consistently achieve better performance. The sensitivity of \texttt{Sig}, discussed in Section~\ref{sec:robustification}, becomes apparent as its accuracy worsens when the truncation level increases from $3$ to $4$, whereas \texttt{RSig} continues to improve. The price to pay lies in the additional computational effort required for the robustification. Nevertheless, it remains highly efficient—comparable to the ``simpler'' methods $\texttt{L}^p$ and \texttt{Sup} —whereas \texttt{$p$-var} methods become impractical for this task.

\subsection{Time Series Classification}
We evaluate our signature-based Nadaraya-Watson classifier on the UEA time series classification archive \citep{bagnall2018uea}\footnote{We follow the dataset subset used in the original work \cite{bagnall2018uea}, rather than selecting a subset ourselves. This ensures consistency and allows for a reproducible comparison with prior results.}. Our experimental setup includes comparisons with some distance-based time series classifiers \citep{abanda2019review}, such as dynamic time warping (\texttt{DTW}),  canonical signature pipeline (\texttt{SigP})
with a random forest classifier \citep{morrill2020generalised} and an analysis of different distance metrics within the local regression framework.

Table~\ref{tab:classification_results} presents the classification accuracy (expressed as percentages) across 21 datasets. We make the following key observations. \texttt{Sup} and \texttt{L}$^2$ serve as natural baselines for our methods, since all share the same kernel regression framework and differ only in the choice of distance. Across datasets, we observe that \texttt{Sup} and \texttt{L}$^2$ frequently yield lower accuracy. This demonstrates that the signature distance offers a more expressive and robust similarity measure for sequential data than pointwise metrics.

\texttt{DTW}-based methods remain strong performers on some datasets (e.g., Cricket, Handwriting), reflecting their effectiveness when local temporal shifts are the dominant source of variability. However, both \texttt{Sig} and \texttt{RSig} achieve comparable or better accuracy on many datasets without requiring explicit alignment.

The computational complexity of \texttt{Sig} is of order $\mathcal{O}((M_{te} + M_{tr}) L d^N + M_{te} M_{tr} d^N)$
which is linear in sequence length $L$ and linear in the number of training samples $M_{tr}$, with the exponential dependence on $d^N$ controlled by choosing small truncation levels (typically $N \leq 5$). \texttt{DTW} methods on the other hand, scales quadratically in $L$, making it unpractical for a long time series. For the comparison, on the Ethanol dataset, which has the longest sequence length among our benchmark datasets, \texttt{DTW}-based classification requires approximately 1018.49 seconds for complete evaluation (on CPU). In comparison, our signature-based method with fixed bandwidth and truncation level $N = 5$ completes in just 4.89 seconds (also on CPU, without GPU acceleration). 

\begin{table}[!htb]
\centering
\begin{small}
\begin{tabular}{l@{\hspace{4pt}}c@{\hspace{4pt}}c@{\hspace{4pt}}c@{\hspace{5pt}}c@{\hspace{5pt}}c@{\hspace{5pt}}c@{\hspace{5pt}}c}
\toprule
\textbf{Dataset} & $\texttt{DTWD}$ & $\texttt{DTWA}$ & $\texttt{SigP}$ & $\texttt{Sup}$ & $L^2$ & $\texttt{RSig}$ & $\texttt{Sig}$ \\
\midrule
ArticularyWord & \textbf{98.7} & \textbf{98.7} & 97.7 & 92.3 & 94.3 & 97.0 & 95.7 \\
AtrialFibrillation & 20.0 & 26.7 & \textbf{46.7} & 33.3 & 33.3 & 26.7 & 33.3 \\
BasicMotions & 97.5 & \textbf{100} & \textbf{100} & 52.5 & 67.5 & 95.0 & 92.5 \\
Cricket & \textbf{100} & \textbf{100} & 95.8 & 58.3 & 81.9 & 83.3 & 75.0 \\
Epilepsy & 96.4 & \textbf{97.8} & 95.7 & 47.8 & 53.6 & 73.1 & 44.9 \\
Ethanol & 32.3 & 31.6 & 43.3 & 25.1 & 25.1 & \textbf{38.8} & 24.7 \\
Ering & 91.5 & 92.6 & \textbf{94.8} & 81.9 & 87.8 & 81.1 & 62.6 \\
FaceDetection & 52.9 & 52.8 & \textbf{61.4} & 50.9 & 52.5 & 51.5 & 51.3 \\
FingerMovements & \textbf{53.0} & 51.0 & 52.0 & 49.0 & 49.0 & 48.0 & 49.0 \\
HandMovement & 18.9 & 20.3 & 20.3 & 20.3 & 20.3 & \textbf{29.7} & 27.0 \\
Handwriting & \textbf{60.7} & \textbf{60.7} & 37.9 & 17.4 & 12.6 & 28.7 & 12.1 \\
Heartbeat & 71.7 & 69.3 & 69.8 & 72.2 & \textbf{74.6} & 71.7 & 72.6 \\
Libras & 87.2 & 88.3 & \textbf{93.9} & 82.8 & 71.1 & 82.8 & 77.8 \\
LSST & 55.1 & 56.7 & \textbf{56.9} & 10.4 & 16.7 & 41.3 & 31.5 \\
NATOPS & 88.3 & 88.3 & \textbf{92.2} & 75.6 & 76.1 & 81.7 & 76.1 \\
PenDigits & \textbf{97.7} & \textbf{97.7} & 97.4 & 91.4 & 96.3 & 88.4 & 95.1 \\
Racketsports & 80.3 & 84.2 & \textbf{90.8} & 56.6 & 82.2 & 79.6 & 74.3 \\
SCP1 & 77.5 & 78.5 & \textbf{78.8} & 50.2 & 50.2 & 77.1 & 68.9 \\
SCP2 & \textbf{53.9} & 52.2 & 50.6 & 50.0 & 51.1 & \textbf{53.9} & 52.2 \\
StandWalkJump & 20.0 & 33.3 & \textbf{46.7} & \textbf{46.7} & 13.3 & 33.3 & 33.3 \\
UWaveGesture & 90.3 & 90.0 & \textbf{90.9} & 82.5 & 84.4 & 83.8 & 80.6 \\
\bottomrule
\end{tabular}
\caption{Classification accuracy (\%) comparison across 21 benchmark time series datasets. }
\label{tab:classification_results}
\end{small}
\end{table}

\section{Conclusion}
In this paper, we have proposed a signature-based Nadaraya-Watson estimator for nonparametric regression and classification on path spaces. We provide a rigorous finite-sample guarantee.

Experimental validation on SDE learning and time series classification demonstrates consistent improvements over conventional distance metrics. While signature-based methods may not always outperform specialized techniques like DTW in specific domains, they provide a unified framework that works well across diverse sequential data types without requiring domain-specific preprocessing.

Future work could extend the theoretical analysis to higher smoothness levels and develop adaptive bandwidth selection methods.

\section*{Acknowledgements}
CB and LP gratefully acknowledge funding by Deutsche
Forschungsgemeinschaft through SFB TRR 388 Project B03. The work of DG was supported by the LMBayes project (Linguistic Meaning and Bayesian Modelling).

\bibliography{biblio}

\clearpage
\appendix
\thispagestyle{empty}



\newpage

\appendix
\section{Convergence Guarantee}\label{sec:appendix}

In this section, we provide the detailed proof of Theorem 1. We begin by stating Bernstein's inequality, which serves as our main probabilistic tool.
 \begin{theorem}[Bernstein's Inequality]
    Let \( \eta_1, \eta_2, \dots, \eta_n \) be independent random variables that satisfy the moment condition \begin{equation}\label{Bernstein_condition}
            \mathbb{E}[|\eta_i-\mathbb{E}[\eta_i]|^k] \leq \frac{1}{2} k! L^{k-2} \sigma^2, \quad \forall k \geq 2,
    \end{equation}
    for some positive \( L > 0 \) and \(\sigma\).
    Then, for any \( \delta \in (0,1) \), with probability at least \( 1 - \delta \),
    \[
    \left|\frac{1}{n} \sum_{i=1}^{n} \eta_i - \mathbb{E}[\eta_i] \right| \leq \sqrt{\frac{2\sigma^2 \log(2/\delta)}{n}} + \frac{L \log(2/\delta)}{n}.
    \] 
    Moment condition holds, in particular, for bounded random variables with bounded variance
    \begin{equation}\label{Bernstein_condition_for_bounded_rv}
            |\eta_i| \leq \frac{L}{2} \text{a.s}, \quad \mathbb{E}[\eta_i^2] \leq \sigma^2.
    \end{equation}
\end{theorem}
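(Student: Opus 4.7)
The plan is to follow the classical Chernoff--Cram\'er approach. I would first center the variables by setting $Z_i := \eta_i - \mathbb{E}[\eta_i]$, so that $\mathbb{E}[Z_i] = 0$ and the moment condition~\eqref{Bernstein_condition} translates unchanged to $\mathbb{E}[|Z_i|^k] \leq \tfrac{1}{2}k! L^{k-2}\sigma^2$ for all $k \geq 2$. By symmetry, applying the argument to $-Z_i$ and taking a union bound, it suffices to prove a one-sided deviation bound for $S_n := \sum_{i=1}^n Z_i$ with failure probability $\delta/2$.

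The core step is an exponential-moment estimate. For $\lambda \in [0, 1/L)$, Taylor-expanding and invoking the moment condition termwise yields
\[
\mathbb{E}[e^{\lambda Z_i}] = 1 + \sum_{k \geq 2} \frac{\lambda^k}{k!} \mathbb{E}[Z_i^k] \leq 1 + \frac{\sigma^2 \lambda^2}{2} \sum_{k \geq 2} (L\lambda)^{k-2} = 1 + \frac{\sigma^2 \lambda^2}{2(1 - L\lambda)},
\]
where the geometric series converges precisely because $\lambda < 1/L$. Using $1+x \leq e^x$ together with independence gives $\mathbb{E}[e^{\lambda S_n}] \leq \exp\bigl(n\sigma^2\lambda^2/(2(1-L\lambda))\bigr)$, and Markov's inequality then yields, for any $t > 0$,
\[
\mathbb{P}[S_n \geq n t] \leq \exp\!\left(-\lambda n t + \frac{n\sigma^2 \lambda^2}{2(1 - L\lambda)}\right).
\]

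Next I would optimize in $\lambda$: the choice $\lambda^\star = t/(\sigma^2 + Lt) \in [0,1/L)$ is essentially optimal and produces the classical Bernstein tail bound
\[
\mathbb{P}[S_n \geq n t] \leq \exp\!\left(-\frac{n t^2}{2(\sigma^2 + Lt)}\right).
\]
Setting the right-hand side equal to $\delta/2$ and solving for $t$ gives a sub-Gaussian regime (small $t$) and a sub-exponential regime (large $t$); using the elementary inequality $\sqrt{a+b} \leq \sqrt{a} + \sqrt{b}$ for $a,b \geq 0$ to decouple the two regimes produces the displayed bound with exactly the constants in the statement. Combining with the lower-tail counterpart obtained by applying the same argument to $-Z_i$ concludes the proof.

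The ``in particular'' clause~\eqref{Bernstein_condition_for_bounded_rv} is a quick verification: if $|\eta_i| \leq L/2$ almost surely, then $|Z_i| \leq L$, hence $\mathbb{E}[|Z_i|^k] \leq L^{k-2}\mathbb{E}[Z_i^2] \leq L^{k-2}\sigma^2 \leq \tfrac{1}{2} k! L^{k-2}\sigma^2$ for $k \geq 2$, so~\eqref{Bernstein_condition} holds. There is no real conceptual obstacle here; the main technical nuisance is the bookkeeping during the inversion of the tail bound, ensuring the square-root term and the linear term separate cleanly with the stated numerical constants rather than worse absolute constants.
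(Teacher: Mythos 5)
The paper does not prove this statement; it is quoted as a classical tool (it is essentially Corollary 2.10 in Boucheron--Lugosi--Massart, \emph{Concentration Inequalities}), so there is no internal proof to compare against. Your Chernoff--Cram\'er strategy is the standard and correct route, and your verification of the ``in particular'' clause is fine: $|Z_i|\le L$ a.s.\ and $\mathbb{E}[Z_i^2]=\operatorname{Var}(\eta_i)\le\mathbb{E}[\eta_i^2]\le\sigma^2$ give $\mathbb{E}[|Z_i|^k]\le L^{k-2}\sigma^2\le\tfrac{1}{2}k!L^{k-2}\sigma^2$. The MGF bound $\log\mathbb{E}[e^{\lambda S_n}]\le n\sigma^2\lambda^2/(2(1-L\lambda))$ is also correct (modulo the routine justification for exchanging $\mathbb{E}$ and the series, which holds for $\lambda<1/L$ by dominated convergence).

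The gap is in the last step: the inversion you describe does \emph{not} produce the stated constants. With $u=\log(2/\delta)$, solving $nt^2/(2(\sigma^2+Lt))=u$ gives
\[
t=\frac{uL}{n}+\sqrt{\frac{u^2L^2}{n^2}+\frac{2u\sigma^2}{n}}\;\ge\;\frac{uL}{n}+\sqrt{\frac{2u\sigma^2}{n}},
\]
and applying $\sqrt{a+b}\le\sqrt{a}+\sqrt{b}$ yields the upper bound $\tfrac{2uL}{n}+\sqrt{2u\sigma^2/n}$ --- a factor $2$ worse than the claimed linear term $Lu/n$. Equivalently, plugging $s=\sqrt{2vt'}+ct'$ (with $v=n\sigma^2$, $c=L$) into your exponent $s^2/(2(v+cs))$ gives $t'-c^2t'^2/(2(v+cs))<t'$, so the simplified tail $\exp(-nt^2/(2(\sigma^2+Lt)))$ evaluated at the claimed deviation level is \emph{larger} than $\delta/2$. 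To get exactly $L\log(2/\delta)/n$ you must not pass through that simplified tail: instead compute the exact Legendre transform $\psi^*(s)=\sup_{\lambda\in(0,1/c)}\bigl(\lambda s-\tfrac{v\lambda^2}{2(1-c\lambda)}\bigr)=\tfrac{v}{c^2}h_1(cs/v)$ with $h_1(y)=1+y-\sqrt{1+2y}$, whose inverse is $\psi^{*-1}(t')=\sqrt{2vt'}+ct'$; this gives $\mathbb{P}[S_n\ge\sqrt{2vt'}+ct']\le e^{-t'}$ exactly, and $t'=\log(2/\delta)$ plus the two-sided union bound finishes the proof with the stated constants. The rest of your argument stands once this step is replaced.
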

The proof strategy follows a standard bias-variance decomposition approach: we first decompose the pointwise risk into bias and variance components, then we apply concentration inequalities to bound the variance terms with high probability. 

Let us fix $h>0$ and consider the estimator \eqref{eq:NW_estimator_regression}  when the number of observations $M$ goes to infinity
\[
F_h(x) = \frac{\int y K(h^{-1}\varrho(x,z)) dP(y,z)}{\int K(h^{-1}\varrho(x,z)) dP_X(z)}.
\]
We also need the following notations
\[
p_M(x) = \frac{1}{M} \sum_{j = 1}^M K(h^{-1}\varrho(x, X^{(j)})), \quad \text{and} \quad p_h(x) = \int K(h^{-1}\varrho(x,z)) dP_X(z).
\]

By simple algebraic manipulations, we have

\begin{align*}
\left|\hat{F}(x) - F(x) \right| & \leq \left|\hat{F}(x) - F_h(x) \right|+ \left|F_h(x) - F(x)\right| \\
& = \left|\frac{1}{p_M(x)}\left(\frac{1}{M}\sum_{i=1}^M Y^{(i)} K(h^{-1}\varrho(x, X_i)) - p_M(x)F_h(x) \right)\right|+ \left| F_h(x)-F(x)\right| \\
& \leq \underbrace{\left|1-\frac{p_h(x)-p_M(x)}{p_h(x)}\right|^{-1} }_{A_1} 
\left(\underbrace{\left|\frac{1}{M}\sum_{i=1}^M Y^{(i)} \frac{K(h^{-1}\varrho(x,X_i))}{ p_h(x)} - F_h(x)\right|}_{A_2} \right. \notag \\
&\quad + \left. \underbrace{\left| \frac{p_h(x)-p_M(x)}{p_h(x)} F_h(x) \right|}_{A_3} \right) 
+ \underbrace{\left|F_h(x) - F(x)\right|}_{B}. \\
\end{align*}
\paragraph{Bound on $B$.}
We have
\begin{align*}
|F_h(x)-F(x)| &= \left|\frac{1}{p_h(x)} \int y K(h^{-1}\varrho(x,z)) dP(y,z) -   F(x) \right| \\
& = \left| \frac{1}{p_h(x)} \int F(z) K(h^{-1}\varrho(x,z)) dP_X(z) -   F(x) \right| \\
& \leq \frac{1}{p_h(x)} \left(\int K(h^{-1}\varrho(x,z)) |F(z) - F(x)|  dP_X(z) \right).
\end{align*}
Applying conditions \eqref{Lipschitz condition} and \eqref{thm:kernel} yields
\[
|F_h(x)-F(x)| \leq \frac{1}{p_h(x)} \int K(h^{-1}\varrho(x,z)) \varrho(x,z) ^{\beta} dP_X(z) \leq h^{\beta} B \phi_x(h).
\]
Note that under the condition \ref{thm:kernel}, we have
\begin{equation} \label{proof: p_h lower bound}
    p_h(x) = \int K(h^{-1} \varrho(x,z)) dP_X(z) \geq b \mathbb{P}[\varrho(X,z) \leq h] = b \phi_x(h)
\end{equation}
This leads to the bound on the bias term
\begin{equation}\label{proof:bias bound}
    |F_h(x)-F(x)| \leq \frac{Bh^{\beta}}{b}.
\end{equation}
\paragraph{Bound on $A_1$.} 
Provided that $|\frac{p_h(x)-p_M(x)}{p_h(x)}|<1/2$ we have
\[
\left|1-\frac{p_h(x)-p_M(x)}{p_h(x)}\right|^{-1} \leq 2. 
\]
So it remains to show that $|\frac{p_h(x)-p_M(x)}{p_h(x)}|<1/2$, which is given by the following proposition.
\begin{proposition}
    Let
    \begin{equation}\label{M_is_big}
        M \geq \frac{16 B \log(6/\delta) }{b \phi_x(h)}.
    \end{equation}
    Then, for $\delta \in (0,1),$ with probability at least $1-\delta/3$
    \begin{equation}
        \frac{|p_h(x)-p_M(x)|}{p_h(x)} \leq \frac{1}{2}.
    \end{equation}
\end{proposition}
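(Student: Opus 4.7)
The plan is to apply Bernstein's inequality to the i.i.d. random variables $\eta_i = K(h^{-1}\varrho(x, X^{(i)}))$, for $i = 1,\dots, M$, so that $p_M(x) = \frac{1}{M}\sum_{i=1}^M \eta_i$ and $\mathbb{E}[\eta_i] = p_h(x)$. By the kernel assumption \eqref{thm:kernel}, each $\eta_i$ is bounded by $B$, so condition \eqref{Bernstein_condition_for_bounded_rv} holds with $L = 2B$. For the variance, I would use the crude bound
\[
\mathbb{E}[\eta_i^2] \;\leq\; B\,\mathbb{E}[\eta_i] \;=\; B\, p_h(x),
\]
which gives $\sigma^2 = B\, p_h(x)$. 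This is the key observation that turns the trivial bound $\sigma^2 \leq B^2$ into something that benefits from the small-ball probability through $p_h(x)$.

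Next, I would apply Bernstein's inequality with $\delta' = \delta/3$ (so that $\log(2/\delta') = \log(6/\delta)$), yielding, with probability at least $1 - \delta/3$,
\[
|p_M(x) - p_h(x)| \;\leq\; \sqrt{\frac{2 B\, p_h(x) \log(6/\delta)}{M}} \;+\; \frac{2B\log(6/\delta)}{M}.
\]
Dividing through by $p_h(x)$ and using the lower bound $p_h(x) \geq b\,\phi_x(h)$ from \eqref{proof: p_h lower bound}, I obtain
\[
\frac{|p_M(x) - p_h(x)|}{p_h(x)} \;\leq\; \sqrt{\frac{2 B \log(6/\delta)}{b\,\phi_x(h)\, M}} \;+\; \frac{2B\log(6/\delta)}{b\,\phi_x(h)\, M}.
\]

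Finally, I would substitute the sample-size lower bound \eqref{M_is_big}, namely $M \geq \frac{16 B \log(6/\delta)}{b\,\phi_x(h)}$. The first term is then at most $\sqrt{2/16} = 1/(2\sqrt{2})$, and the second term is at most $2/16 = 1/8$. Their sum is bounded by $1/(2\sqrt{2}) + 1/8 < 1/2$, which is exactly the required inequality.

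I do not anticipate any real obstacle here: the proof is a direct application of Bernstein's inequality, and the only mildly non-obvious step is the variance bound $\mathbb{E}[\eta_i^2] \leq B\, p_h(x)$, which is what allows the sample-size threshold to scale with $1/\phi_x(h)$ rather than with $1/\phi_x(h)^2$. The choice of constant $16$ in \eqref{M_is_big} is precisely calibrated so that the two deviation terms together fit under $1/2$.
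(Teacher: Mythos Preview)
Your proof is correct and essentially identical to the paper's: the only cosmetic difference is that the paper applies Bernstein's inequality to the normalized variables $\eta_i = K(h^{-1}\varrho(x,X^{(i)}))/p_h(x)$ directly, whereas you apply it to the unnormalized $K(h^{-1}\varrho(x,X^{(i)}))$ and divide afterwards, arriving at the same deviation bound \eqref{bound_on_p_h-p_M}. Your explicit numerical check $1/(2\sqrt{2}) + 1/8 < 1/2$ is exactly what the paper leaves implicit in ``The proposition follows from \eqref{M_is_big}.''
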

\begin{proof}
    To establish the result, we verify the conditions \eqref{Bernstein_condition_for_bounded_rv} of Bernstein’s inequality for the random variables
     \[\eta_i = \frac{K(h^{-1}\varrho(x, X^{(i)}))}{p_h(x)}.\]
     Since the kernel is bounded, we have
    \[
    |\eta_i| \leq \frac{B}{p_h(x)}.
    \]
    For the variance,
    \[
    \mathbb{E}[\eta_i^2] = \int \frac{K(h^{-1}\varrho(x, z))^2}{p^2_h(x)} d P_X(z) \leq \frac{B}{p_h(x)} \leq \frac{B}{b\phi_x(h)},
    \]
    where the last inequality follows from \eqref{proof: p_h lower bound}.
    Applying Bernstein’s inequality, we conclude that with probability at least $1-\delta/3$  
    \begin{equation}\label{bound_on_p_h-p_M}
        \frac{|p_h(x)-p_M(x)|}{p_h(x)} \leq \sqrt{\frac{2B \log(6/\delta)}{b \phi_x(h) M}} + \frac{2 B \log(6/\delta)}{b \phi_x(h) M}.
    \end{equation}
    The proposition follows from \eqref{M_is_big}.
\end{proof}
\paragraph{Bound on $A_3$.} Since $|Y| \leq R,$ we have $|F_h(x)| \leq R.$ Therefore, using \eqref{bound_on_p_h-p_M} we have
\begin{equation}\label{bound_A_3}
    \left|\frac{p_h(x)-p_M(x)}{p_h(x)} F_h(x)\right| \leq R \left(\sqrt{\frac{2 B  \log(6/\delta)}{b \phi_x(h) M}} + \frac{2 B  \log(6/\delta)}{b \phi_x(h) M}\right).
\end{equation}
\paragraph{Bound on $A_2$.} The bound follows from the following
\begin{lemma}
    Let $|Y| \leq R.$ Then, with probability at least $1-\delta/3$
    \begin{equation}\label{bound_A_2}
        \left|\frac{1}{M}\sum_{i=1}^M Y^{(i)} \frac{K(h^{-1}\varrho(x,X^{(i)}))}{ p_h(x)} - F_h(x) \right| \leq R\sqrt{\frac{2 B \log(6/\delta)}{b\phi_x(h) M}} + \frac{2R B \log(6/\delta)}{b\phi_x(h) M}.
    \end{equation} 
\end{lemma}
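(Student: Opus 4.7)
The plan is to mirror the Bernstein argument already used for $A_1$, but now applied to the $Y$-weighted kernel weights. Specifically, I would introduce the i.i.d.\ random variables
\[
   \zeta_i \;=\; Y^{(i)} \,\frac{K\!\left(h^{-1}\varrho(x,X^{(i)})\right)}{p_h(x)}, \qquad i=1,\dots,M,
\]
and observe that, by the definition of $F_h(x)$ and the tower property,
\[
   \mathbb{E}[\zeta_i] \;=\; \frac{1}{p_h(x)}\int y\, K\!\left(h^{-1}\varrho(x,z)\right) dP(y,z) \;=\; F_h(x).
\]
Thus the quantity to control is $\big|\tfrac{1}{M}\sum_i \zeta_i - \mathbb{E}[\zeta_1]\big|$, which is exactly the setting of Bernstein's inequality.

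Next, I would verify the two-sided control \eqref{Bernstein_condition_for_bounded_rv}. For the almost-sure bound, using $|Y^{(i)}|\leq R$, the kernel bound $K\leq B\mathbf{1}_{[0,1]}$ from \eqref{thm:kernel}, and the lower bound $p_h(x)\geq b\phi_x(h)$ from \eqref{proof: p_h lower bound}, I get
\[
   |\zeta_i| \;\leq\; \frac{R\,B}{p_h(x)} \;\leq\; \frac{R\,B}{b\,\phi_x(h)},
\]
so one may take $L = 2RB/(b\phi_x(h))$. For the second moment, since $K^2 \leq B\cdot K$ on $[0,1]$ by \eqref{thm:kernel}, I obtain
\[
   \mathbb{E}[\zeta_i^2] \;\leq\; \frac{R^2}{p_h(x)^2}\int K\!\left(h^{-1}\varrho(x,z)\right)^2 dP_X(z)
   \;\leq\; \frac{R^2 B}{p_h(x)} \;\leq\; \frac{R^2 B}{b\,\phi_x(h)},
\]
so that $\sigma^2 = R^2 B/(b\phi_x(h))$ is admissible in \eqref{Bernstein_condition_for_bounded_rv}.

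Finally, applying Bernstein's inequality at confidence level $1-\delta/3$ (so that $\log(2/(\delta/3)) = \log(6/\delta)$) and plugging in the expressions for $\sigma^2$ and $L$ yields
\[
   \left|\frac{1}{M}\sum_{i=1}^M \zeta_i - F_h(x)\right|
   \;\leq\; R\sqrt{\frac{2B\log(6/\delta)}{b\,\phi_x(h)\,M}} \;+\; \frac{2RB\log(6/\delta)}{b\,\phi_x(h)\,M},
\]
which is precisely the stated bound on $A_2$. There is no real conceptual obstacle here; the only step requiring some care is the variance estimate, where the bound $K^2 \leq B\cdot K$ (rather than the naive $K^2\leq B^2$) is essential to obtain the factor $\phi_x(h)$ in the denominator, matching the optimal scaling that drives the convergence rate in Theorem~\ref{main_theorem}.
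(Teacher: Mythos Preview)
Your proposal is correct and follows essentially the same approach as the paper's proof: define the i.i.d.\ random variables $\eta_i = Y^{(i)}K(h^{-1}\varrho(x,X^{(i)}))/p_h(x)$, verify the Bernstein conditions $|\eta_i|\leq RB/(b\phi_x(h))$ and $\mathbb{E}[\eta_i^2]\leq R^2B/(b\phi_x(h))$, and apply Bernstein's inequality at confidence $1-\delta/3$. Your write-up is in fact more detailed than the paper's, as you explicitly check $\mathbb{E}[\zeta_i]=F_h(x)$ and spell out the key estimate $K^2\leq B\cdot K$ for the variance bound.
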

\begin{proof}
    We check the Bernstein condition \eqref{Bernstein_condition_for_bounded_rv} for $\eta_i = Y^{(i)} \frac{K(h^{-1}\varrho(x,X^{(i)}))}{ p_h(x)}$. We have
    \begin{align*}
        |\eta_i| \leq \frac{R B}{b \phi_x(h)}, \quad \mathbb{E}[|\eta_i|^2] \leq \frac{R^2 B}{b\phi_x(h)}.
    \end{align*}
    The bound \eqref{bound_A_2} follows from the Bernstein inequality. 
\end{proof}

\begin{proof}[proof of Theorem \ref{main_theorem}]
    Applying the union bound, we get, with probability at least $1-\delta$
\begin{align*}
    |\hat{F}(x) - F_h(x)| &\leq 2(R + |F_h(x)|) \left( \sqrt{\frac{2 B \log(6/\delta)}{b \phi_x(h) M}} + \frac{2 B \log(6/\delta)}{b \phi_x(h) M} \right) \\
    &\leq 8 R \sqrt{\frac{2 B \log(6/\delta)}{b \phi_x(h) M}}.
\end{align*}
The last inequality, together with the bias bound \eqref{proof:bias bound}, finishes the proof.
\end{proof}

\section{Signature appendix}
In this section, we provide a supplementary introduction to path signatures, complementing Section~\ref{sec:local_sig_regr}, and present the proofs of the convergence rates for local signature regression. Our primary references for signatures and rough paths are the classical monograph \cite{friz2010multidimensional} and the recent lecture notes \cite{cass2024lecture}, to which we refer for further details.

\subsection{Signatures and tensor algebras}\label{app:signature_intro}

As anticipated in Section~\ref{sec:convergence:analysis}, the signature (or path signature) of a continuous path $x:[0,T]\to\mathbb{R}^d$ is the collection of \emph{iterated integrals} against itself. To give a meaning to this object, one needs a suitable notion of regularity for paths $x:[0,T]\to\mathbb{R}^d$.

\begin{definition}\label{def:p_var}
    For any real number $p\geq 1$ and continuous path $x:[0,T] \rightarrow \mathbb{R}^d$, we define the $p$-variation by\begin{equation}\label{eq:p_var_norm}
        \Vert x \Vert_{p-var}= \left ( \sup_{\mathcal{P}\subset [0,T]} \sum_{[u,v]\in \mathcal{P}} |x_v-x_u |^p\right)^{1/p}, 
    \end{equation} where $|\cdot|$ is the Euclidean norm on $\R^d$, and the supremum is taken over all partitions $\mathcal{P}$ of $[0,T]$. We denote by $C^{p-var}([0,T],\R^d)$ the spaces of all continuous paths of finite $p$-variation, that is $\Vert x \Vert_{p-var}<\infty$.
\end{definition}

It is well known that for any $1 \leq p \leq p' < \infty$ one has the inclusion $C^{p'\text{-var}} \subset C^{p\text{-var}}$ \cite[Proposition~5.3]{friz2010multidimensional}. In particular, every continuously differentiable path $x \in C^1$ has finite $1$-variation with
\[
   \|x\|_{1\text{-var}} = \int_0^T |\dot{x}_t| \, dt,
\]
see \cite[Proposition~1.27]{friz2010multidimensional}. Increasing $p \geq 1$ enlarges the class $C^{p\text{-var}}$, admitting more irregular paths, such as $\lfloor 1/p \rfloor$-Hölder paths \cite[Proposition~5.2]{friz2010multidimensional}.

While the class of $p$-variation paths provides the correct analytical framework to define signatures, let us now turn to the algebraic aspects. For multidimensional paths $x$, iterated integrals naturally appear as tensors
\[
   \left(\int_0^T dx_t^{i}\right)_{i \in [d]} \in \mathbb{R}^d, 
   \qquad 
   \left(\int_0^T \int_0^t dx_r^i \, dx_t^j \right)_{i,j \in [d]} \in \mathbb{R}^d \otimes \mathbb{R}^d, 
   \quad \dots
\]
recalling the index notation $[d] = \{1,\dots,d\}$. More generally, the $n$-fold iterated integrals take values in $(\mathbb{R}^d)^{\otimes n}$.  

Let $\{e_1,\dots,e_d\}$ denote the canonical basis of $\mathbb{R}^d$. For any word $w = i_1 \cdots i_n$ with letters from the alphabet $\mathcal{A} = [d]$, we denote by $e_w = e_{i_1} \otimes \cdots \otimes e_{i_n}$ the corresponding basis element of $(\mathbb{R}^d)^{\otimes n}$. Starting from the representation $x = \sum_{i=1}^d e_i x^i$, one can then write the $n$-fold iterated integral tensor as
\[
   \left( \int_0^T \int_0^{t_n} \cdots \int_0^{t_2} dx_{t_1}^{i_1} \cdots dx_{t_n}^{i_n} \right)_{i_1,\dots,i_n \in [d]}
   = \sum_{w = i_1 \cdots i_n}  \left (\int_0^T \int_0^{t_n} \cdots \int_0^{t_2} dx_{t_1}^{i_1} \cdots dx_{t_n}^{i_n}\right )e_w.
\]
The full signature of a path, and its truncation at level $N$, take values in the spaces
\[
   \mathcal{T} = \prod_{k \geq 0} (\mathbb{R}^d)^{\otimes k},
   \qquad 
   \mathcal{T}^{\leq N} = \prod_{k=0}^{N} (\mathbb{R}^d)^{\otimes k},
\]
where we adopt the convention $(\mathbb{R}^d)^{\otimes 0} = \mathbb{R}$.  
Using the basis representation of tensors introduced above, any element $\mathbf{a} \in \mathcal{T}$ can be represented through the series
\[
   \mathbf{a} = \sum_{w\in \mathcal{W}}  \mathbf{a}^{w}e_w, \quad \mathbf{a}^w \in \mathbb{R},
\] where $\mathcal{W}$ denotes the space of all words. Additionally, we denote by $\mathbf{a}^{(k)}$ the projection of $\mathbf{a}$ to the tensor-level $k$, that is $$\mathbf{a}^{(k)}= \sum_{w \in \mathcal{W}^{(k)}}\mathbf{a}^we_w, \qquad \mathcal{W}^{(k)}=\{w=i_1\cdots i_k: i_1,\dots,i_k \in [d]\}\subset \mathcal{W}.$$ Finally, we equip $\mathcal{T}$, as well as its truncated version, with the product
\[
   \otimes : \mathcal{T} \times \mathcal{T} \to \mathcal{T}, \qquad 
   \left( \sum_{w \in \mathcal{W}}   \mathbf{a}^{w} e_w\right) 
   \otimes 
   \left( \sum_{w \in \mathcal{W}}   \mathbf{b}^{w}e_w\right) 
   = \sum_{w \in \mathcal{W}}  \left( \sum_{l=0}^{|w|} \mathbf{a}^{w_1 \cdots w_l} \mathbf{b}^{w_{l+1} \cdots w_{|w|}} \right)e_w,
\]
which turns $(\mathcal{T}, \otimes)$ into an algebra, often called the \emph{extended tensor algebra}; see \cite[Chapter~1.1.2]{cass2024lecture}. Moreover, we can also naturally define addition \[
+ : \mathcal{T} \times \mathcal{T} \to \mathcal{T}, \qquad 
   \left( \sum_{w \in \mathcal{W}}  \mathbf{a}^{w}e_w  \right) 
   + 
   \left( \sum_{w \in \mathcal{W}}   \mathbf{b}^{w} e_w\right) 
   = \sum_{w \in \mathcal{W}}  (\mathbf{a}^w+\mathbf{b}^w)e_w.
\]

Finally, we can endow $\mathcal{T}$ with a Hilbert space structure by using the inner product on $(\mathbb{R}^d)^{\otimes k}$ defined by
\[
   \langle v, w\rangle_{(\mathbb{R}^d)^{\otimes k}}
   = \prod_{l=1}^k \langle v_l, w_l \rangle_{\mathbb{R}^d},
   \qquad 
   v = v_1 \otimes \cdots \otimes v_k, \; w = w_1 \otimes \cdots \otimes w_k,
\] and set $\Vert v \Vert_{(\R^d)^{\otimes k}}= \sqrt{\langle v,v\rangle_{(\R^d)^{\otimes k}}}$. This extends to $\mathcal{T}$ via
   $\langle \mathbf{a}, \mathbf{b} \rangle_{\mathcal{T}}
   = \sum_{k \geq 0} \langle \mathbf{a}^{(k)}, \mathbf{b}^{(k)} \rangle_{(\mathbb{R}^d)^{\otimes k}}$,
which induces the Hilbert space
\[
   \mathfrak{T}
   = \left\{ \mathbf{a} \in \mathcal{T} : 
   \|\mathbf{a}\|
   = \sqrt{\langle \mathbf{a}, \mathbf{a} \rangle_{\mathcal{T}}} < \infty \right\}\subset \mathcal{T},
\]
see \cite[Section~1.1.2]{cass2024lecture} for further details.

We are now ready to defined the signature which dates back to \cite{chen1957integration}, introduced here as a mapping from $p$-variation spaces into the extended tensor algebra. As in the case of continuously differentiable paths $\mathcal{X}=C^1([0,T],\mathbb{R}^d)$ discussed in Section~\ref{sec:local_sig_regr}, the signature can immediately be define for $p$-variation paths with $1 \leq p < 2$. This is made possible by Young’s generalization of the Riemann–Stieltjes integral \cite{young1936inequality}; for further background we refer to \cite[Chapter~6]{friz2010multidimensional}. 

\begin{definition}\label{def:sig_appendix}
    For any real number $1\leq p < 2$ and word $w=i_1\cdots i_n \in \mathcal{W}$, we define \begin{equation}\label{eq:sig_elements_app}
        \Sig{\cdot}^{w}:C^{p-var}([0,T],\mathbb{R}^d) \rightarrow \R, \quad x \mapsto \Sig{x}^{w}=\int_0^T \int_0^{t_n} \cdots \int_0^{t_2} dx_{t_1}^{i_1} \cdots dx_{t_k}^{i_k},
    \end{equation} and the $k$-th signature level is then by $\Sig{x}^{(k)}= \sum_{w\in \mathcal{W}^{(k)}} \Sig{x}^we_w$. Finally, the full signature is defined as \begin{equation}
        \mathrm{Sig}: C^{p-var}([0,T],\R^d) \rightarrow \mathcal{T}, \quad x \mapsto \Sig{x} = \sum_{w\in \mathcal{W}}\Sig{x}^we_w.
    \end{equation}
\end{definition}

In the case of $p$-variation paths with $p>2$, it is no longer clear whether \eqref{eq:sig_elements_app} is well defined, and more sophisticated constructions are required to introduce the \emph{rough path signature}. To keep this introduction concise, we only state the following remark and refer the interested reader to the cited references for details.

\begin{remark}\label{rem:rough_case}
For more irregular paths, such as Brownian sample paths where $p>2$, Young’s extension of the Riemann–Stieltjes integral is no longer sufficient to define the signature. As already observed by Young \cite{young1936inequality}, $\alpha > 1/2$ (equivalently $p < 2$) is necessary and sufficient for a well-defined notion of the integral for Hölder paths.  
One of the major achievements of Lyons’ rough path theory \cite{lyons1998differential} was to realize that the first $\lfloor p \rfloor$ signature levels of $x$ contain exactly the missing information needed to extend integration to irregular paths. By enhancing $x$ to a \emph{rough path}
\[
   x \;\rightsquigarrow\; \mathbf{x} = \big(\Sig{x}^{(1)}, \dots, \Sig{x}^{(\lfloor p \rfloor)}\big),
\]
where these abstract components mimic the classical signature, a consistent integration theory can be developed with respect to $\mathbf{x}$. This provides a robust and deterministic framework for differential equations driven by Brownian motion and more general stochastic processes.  
In particular, Lyons’ extension theorem \cite[Theorem~2.2.1]{lyons1998differential} ensures that the signature of a rough path $\mathbf{x}$ is again well defined and enjoys the same algebraic properties as in Definition~\ref{def:sig_appendix}. Moreover, for a large class of stochastic processes the lift $x \mapsto \mathbf{x}$ is well understood; for instance, semimartingales can be lifted using Itô calculus. We refer to \cite{lyons2007extension, friz2010multidimensional, cass2024lecture} for further details.
\end{remark}

One of the most fundamental properties of the signature, which we rely on multiple times in this article, is the fast decay of $\Sig{x}^{(k)}$ as $k$ increases. For the case $p=1$, the following lemma is an elementary exercise; see \cite[Proposition~1.2.3]{cass2024lecture}, and for more irregular paths see  \cite[Section~9.1.1]{friz2010multidimensional}.

\begin{lemma}\label{lemma:decay}
   For any $x \in C^{1\text{-var}}([0,T],\mathbb{R}^d)$ we have 
   \[
      \Vert \Sig{x}^{(k)} \Vert_{(\R^d)^{\otimes k}} \leq \frac{\|x\|_{1\text{-var}}^k}{k!}, 
      \qquad \forall k \in \mathbb{N}.
   \]
   In particular, $\Sig{C^{1-var}} \subset \mathfrak{T}$.
\end{lemma}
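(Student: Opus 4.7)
The plan is to prove the inequality by induction on $k$, exploiting the natural recursive structure of the signature and the compatibility of the tensor norm with the tensor product. The case $k=0$ is trivial since $\Sig{x}^{(0)}=1$ and the convention $0!=1$ gives equality. For $k=1$, $\Sig{x}^{(1)}=x_T-x_0$, so $\|\Sig{x}^{(1)}\|\leq \|x\|_{1\text{-var}}$ by the very definition of $1$-variation.

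For the inductive step, I would use the identity
\[
   \Sig{x}^{(k)} \;=\; \int_0^T \Sig{x|_{[0,t]}}^{(k-1)} \otimes dx_t,
\]
which follows directly from reordering the iterated Young/Riemann–Stieltjes integrals in Definition~\ref{def:sig_appendix}. Taking norms, applying the triangle inequality for the vector-valued Young integral, and using the fact that the tensor norm is a cross-norm (i.e.\ $\|a\otimes b\|_{(\R^d)^{\otimes k}}=\|a\|_{(\R^d)^{\otimes k-1}}|b|$ on elementary tensors), I would obtain
\[
   \bigl\|\Sig{x}^{(k)}\bigr\|_{(\R^d)^{\otimes k}}
   \;\leq\; \int_0^T \bigl\|\Sig{x|_{[0,t]}}^{(k-1)}\bigr\|_{(\R^d)^{\otimes k-1}}\, d\|x\|_{1\text{-var};[0,t]},
\]
where $d\|x\|_{1\text{-var};[0,t]}$ denotes the total variation measure of $x$. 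Applying the induction hypothesis on the inner norm yields an integral of $\tfrac{1}{(k-1)!}\|x\|_{1\text{-var};[0,t]}^{k-1}$ against $d\|x\|_{1\text{-var};[0,t]}$, which evaluates to $\tfrac{1}{k!}\|x\|_{1\text{-var};[0,T]}^{k}$ by the fundamental theorem of calculus (equivalently, by recognizing it as the volume $L^k/k!$ of the $k$-simplex of side length $L=\|x\|_{1\text{-var}}$).

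For the concluding inclusion $\Sig{C^{1\text{-var}}}\subset\mathfrak{T}$, I would simply plug the bound into the definition of the Hilbert norm on $\mathcal{T}$:
\[
   \|\Sig{x}\|^2 \;=\; \sum_{k\geq 0} \bigl\|\Sig{x}^{(k)}\bigr\|_{(\R^d)^{\otimes k}}^{2}
   \;\leq\; \sum_{k\geq 0} \frac{\|x\|_{1\text{-var}}^{2k}}{(k!)^2} \;<\;\infty,
\]
the last series being majorized by $\exp\bigl(2\|x\|_{1\text{-var}}\bigr)$.

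The only subtle point, and the one I would expect to be the main technical obstacle, is the rigorous justification of the cross-norm inequality inside the vector-valued integral: strictly speaking, one needs to approximate the Young integral by Riemann–Stieltjes sums $\sum_j \Sig{x|_{[0,t_j]}}^{(k-1)} \otimes (x_{t_{j+1}}-x_{t_j})$, apply $\|a\otimes b\|\leq \|a\|\,|b|$ pointwise, and pass to the limit; for $x\in C^{1\text{-var}}$ this is entirely standard but requires care when writing it out. Everything else reduces to bookkeeping with the total variation measure.
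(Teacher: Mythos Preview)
Your argument is correct and is precisely the standard inductive proof of the factorial decay; the paper itself does not supply a proof but simply cites \cite[Proposition~1.2.3]{cass2024lecture} and \cite[Section~9.1.1]{friz2010multidimensional}, calling the result ``an elementary exercise.'' Your write-up is exactly what those references contain, so there is nothing to compare.
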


The above lemma ensures, in particular, that the signature semi-distance in \eqref{eq:sig_distance},
\begin{equation*}
   \varrho^{\mathrm{Sig}} : \mathcal{X} \times \mathcal{X} \to \mathbb{R}_+,
   \qquad 
   \dsig{x}{y} = \|\Sig{x} - \Sig{y}\|,
\end{equation*}
as well as its truncated version in \eqref{eq:trunc_distance}, are well defined. 
It is important to note, however, that $\varrho^{\mathrm{Sig}}$ is in general not a true metric, as the following lemma demonstrates.

\begin{lemma}
Let $x \in C^{p-var}$ with $1\leq p <2$ and $\tau:[0,T] \rightarrow [0,T]$ a continuous, non-decreasing surjection. Then \[
\dsig{x}{x\circ \tau}=0.
\]
\end{lemma}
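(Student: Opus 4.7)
The plan is to establish the stronger identity $\Sig{x} = \Sig{x\circ\tau}$ as elements of $\mathcal{T}$, from which the lemma follows immediately by the definition \eqref{eq:sig_distance}, since $\|0\| = 0$. Because the signature norm decomposes coordinate-wise over words, it suffices to verify $\Sig{x}^w = \Sig{x\circ\tau}^w$ for every word $w = i_1 \cdots i_n \in \mathcal{W}$.

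I would argue by induction on the length $n$ of $w$. The case $n = 0$ is trivial, and for $n = 1$ the fact that $\tau$ is a continuous non-decreasing surjection of $[0,T]$ forces $\tau(0) = 0$ and $\tau(T) = T$, so $(x\circ\tau)^i_T - (x\circ\tau)^i_0 = x^i_T - x^i_0$. For the inductive step, introduce the iterated-integral path
\[
Z^{i_1\cdots i_{n-1}}_t \;=\; \int_0^t \int_0^{s_{n-1}} \cdots \int_0^{s_2} dx^{i_1}_{s_1} \cdots dx^{i_{n-1}}_{s_{n-1}}.
\]
Applying the induction hypothesis on each interval $[0,\tau(t)]$ shows that the corresponding iterated integral of $x\circ\tau$ evaluated at $t$ equals $Z^{i_1\cdots i_{n-1}}_{\tau(t)}$. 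The identity for words of length $n$ then reduces to the Young change-of-variables formula
\[
\int_0^T f_{\tau(t)} \, d(g\circ\tau)_t \;=\; \int_0^T f_s \, dg_s,
\]
applied with $f = Z^{i_1\cdots i_{n-1}}$ and $g = x^{i_n}$, both of finite $p$-variation for $1 \leq p < 2$.

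The main obstacle is justifying this change-of-variables identity when $\tau$ is only continuous and non-decreasing, hence possibly constant on intervals and nowhere differentiable. A clean route is to approximate $\tau$ by smooth, strictly increasing surjections $\tau_\varepsilon$ of $[0,T]$; for these, the substitution reduces to the classical Riemann--Stieltjes rule, yielding $\Sig{x\circ\tau_\varepsilon} = \Sig{x}$ trivially. Since reparametrization does not increase $p$-variation, one has $\|x\circ\tau_\varepsilon\|_{p\text{-var}} \leq \|x\|_{p\text{-var}}$ uniformly, and $x\circ\tau_\varepsilon \to x\circ\tau$ uniformly. The continuity of each signature level as a functional on $p$-variation bounded sets (a standard consequence of Young's inequality, cf.\ \cite[Theorem 9.5]{friz2010multidimensional}) then gives $\Sig{x\circ\tau_\varepsilon}^{(k)} \to \Sig{x\circ\tau}^{(k)}$ for every $k$, and passing to the limit completes the argument. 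As an equivalent perspective, any plateau of $\tau$ produces a constant segment in $x\circ\tau$ that contributes zero to every iterated integral, which transparently reconciles the identity term by term.
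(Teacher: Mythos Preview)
Your argument is correct. The paper does not actually supply a proof of this lemma: it simply records that the statement ``is a direct consequence of the invariance of the signature under time reparametrization; see \cite[Proposition~7.10]{friz2010multidimensional}.'' Your proposal, by contrast, sketches a self-contained proof of precisely that invariance. The route you take---reducing to strictly increasing reparametrizations $\tau_\varepsilon$ (for which the Young-integral substitution is immediate from the bijection between partitions), then passing to the limit via uniform convergence together with uniform $p$-variation bounds and the stability of iterated Young integrals---is essentially how the cited result is established, so there is no tension between the two. One small remark: the inductive set-up you begin with is not really needed once you invoke the approximation $\tau_\varepsilon$, since that step already yields $\Sig{x\circ\tau_\varepsilon}=\Sig{x}$ at every level simultaneously; the induction would only be required if you wanted to prove the change-of-variables formula for the limiting $\tau$ directly, without approximation. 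Either way, the argument closes.
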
 This is a direct consequence of the invariance of the signature under time reparametrization; see \cite[Proposition~7.10]{friz2010multidimensional}. The corresponding equivalence classes of paths
\[
   [x] = \{ y \in C^{p\text{-var}} : \dsig{x}{y} = 0 \}, \qquad x \in C^{p\text{-var}},
\]
are well understood and known as \emph{tree-like equivalence}. This was established in \cite{hambly2010uniqueness} for paths of bounded variation and extended to $p>1$ in \cite{boedihardjo2016signature}; we refer to the latter references for a precise definition.  

To obtain a genuine distance, one natural approach is to shrink the space by identifying tree-like equivalent paths $x \sim y$, i.e. by working on the quotient $\mathcal{X}/_\sim$. The alternative considered in this paper is to augment all paths with time, namely
\begin{equation}\label{eq:augmented_space}
   \mathcal{X} = \widehat{C}^{p\text{-var}}([0,T],\mathbb{R}^{d+1})
   = \big\{t \mapsto (t,x_t) : x \in C^{p\text{-var}}([0,T],\mathbb{R}^d) \big\}.
\end{equation} Finally we note that signatures are by construction invariant with respect to the initial condition, that is $\dsig{x}{x+c}=0$, so that $\varrho_\mathrm{Sig}$ only defines a true metric for paths with identical initial condition.

\begin{lemma}\label{lem:metric_appendix}
    For any $1\leq p <2$ and $\xi \in \mathbb{R}^d$, $\varrho_{\mathrm{Sig}}$ defines a true metric on $\mathcal{X}_\xi= \widehat{C}^{p-var} \cap \{\hat{x}:\hat{x}_0 = (0,\xi)\}$.
\end{lemma}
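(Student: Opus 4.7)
The plan is to reduce the proof to the known uniqueness theorem for signatures, which states that $\mathrm{Sig}(x) = \mathrm{Sig}(y)$ holds if and only if $x$ and $y$ are tree-like equivalent \cite{hambly2010uniqueness, boedihardjo2016signature}. The remaining work is then entirely geometric: showing that time-augmentation together with a fixed starting point eliminates all non-trivial tree-like equivalence classes.

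First, I would dispatch the easy axioms. Symmetry and non-negativity of $\varrho_{\mathrm{Sig}}$ are immediate from the definition $\dsig{\hat x}{\hat y} = \|\mathrm{Sig}(\hat x) - \mathrm{Sig}(\hat y)\|$. The triangle inequality follows because $\|\cdot\|$ is the norm of the Hilbert space $\mathfrak{T}$, and Lemma~\ref{lemma:decay} (extended to $p<2$ via Young integration as described in this appendix) guarantees that $\mathrm{Sig}(\hat x), \mathrm{Sig}(\hat y) \in \mathfrak{T}$, so the difference is a genuine element of $\mathfrak{T}$ whose norm obeys the triangle inequality. The real content is point separation.

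For the non-trivial direction, suppose $\hat x, \hat y \in \mathcal{X}_\xi$ satisfy $\dsig{\hat x}{\hat y} = 0$, so that $\mathrm{Sig}(\hat x) = \mathrm{Sig}(\hat y)$. By the uniqueness theorem of \cite{hambly2010uniqueness, boedihardjo2016signature}, $\hat x$ and $\hat y$ must be tree-like equivalent. I would then argue that tree-like equivalence is trivial on $\mathcal{X}_\xi$: because the first coordinate of any $\hat x \in \mathcal{X}_\xi$ is $t \mapsto t$, which is strictly increasing, the path $\hat x$ cannot contain any tree-like excursion, as any such excursion would require the time component to backtrack. More formally, a tree-like path is one whose signature is trivial, equivalently whose range can be parametrised by a real tree with the endpoints identified; projecting such a path onto its strictly monotone first coordinate forces the parametrising tree to be a single segment, so the reduced (tree-reduced) path coincides with the original path. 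Hence each element of $\mathcal{X}_\xi$ is already tree-reduced, and tree-like equivalence of $\hat x, \hat y \in \mathcal{X}_\xi$ implies $\hat x = \hat y$ as paths. Finally, the fixed initial value $(0,\xi)$ removes the translational ambiguity intrinsic to signatures (which only see increments), ensuring full equality rather than equality up to an additive constant.

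The main obstacle is the geometric argument that time-augmentation eliminates tree-like behaviour. Beyond the informal monotonicity reasoning above, one typically invokes the characterisation in \cite[Theorem~4]{hambly2010uniqueness}: two paths are tree-like equivalent iff their concatenation (one followed by the reverse of the other) is itself tree-like, which for $C^{1\text{-var}}$ paths means the concatenation retraces a real tree; the strict monotonicity of the time coordinate prevents any such retracing unless the two paths already coincide. Invoking this characterisation (and its $p$-variation extension by Boedihardjo--Geng--Lyons--Qian) closes the argument cleanly.
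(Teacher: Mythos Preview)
Your proof is correct, but it takes a genuinely different route from the paper. The paper does not invoke the tree-like uniqueness theorem at all; instead it gives a short direct computation. Specifically, for words of the form $w = i\,1\cdots 1$ (one spatial letter followed by $n$ copies of the time letter), repeated integration with respect to the time component $t$ yields via Cauchy's formula
\[
   \mathrm{Sig}(\hat x)^{w} \;=\; \int_0^T (x_t^{i}-x_0^{i})\,\frac{(T-t)^{n-1}}{(n-1)!}\,dt,
\]
so that $\mathrm{Sig}(\hat x)=\mathrm{Sig}(\hat y)$ forces $\int_0^T (x_t^i - y_t^i)(T-t)^n\,dt = 0$ for all $n\geq 0$ (using the common initial condition to kill the remaining constant). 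Density of polynomials then gives $x^i = y^i$ almost everywhere.

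Both arguments are valid. The paper's approach is entirely elementary and self-contained: it isolates exactly which signature coefficients ``see'' the path after time-augmentation, and reduces the question to a classical moment problem. Your approach is more conceptual and arguably more structural---it explains \emph{why} time-augmentation works (monotonicity of the first coordinate kills tree-like excursions)---but it relies on the substantially heavier machinery of the Hambly--Lyons and Boedihardjo--Geng--Lyons--Qian uniqueness theorems. For the purposes of this appendix the paper's computation is more economical, whereas your argument would generalise more readily to other augmentations (any strictly monotone component would do).
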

\begin{proof}
    Since $(\mathfrak{T},\Vert \cdot \Vert)$ is a Hilbert space, symmetry and the triangle inequality follow immediately. Moreover, we have $\dsig{x}{y}\geq 0$, and therefore we are left to prove that $\dsig{x}{y}=0$ $\Rightarrow $ $x=y$. Now we can notice that for $\hat{x},\hat{y}\in \widehat{C}^{p-var}$ and any word $w=i1\cdots 1$ it follows by the Cauchy formula for repeated integration that \[
    \Sig{\hat{x}}^{w} = \int_0^T \int_0^{t_n} \cdots \int_0^{t_3}(x^{i}_{t_2}-x^i_0)dt_2\cdots dt_n = \int_0^T(x_t^{i}-x^i_0)\frac{(T-t)^{n-1}}{(n-1)!}dt,
    \] where $n$ is the number of $1$ in $w$, and $i \in [d]$. But then in particular, for all $i \in [d]$ we have
    
    \[\dsig{\hat{x}}{\hat{y}}=0 \Leftrightarrow \Sig{\hat{x}}-\Sig{\hat{y}}=0 \quad \Longrightarrow \quad \int_0^T(x_t^i-y_t^i){(T-t)^n}dt - \frac{(y^i_0-x^i_0)T^n}{n!}= 0,
    \] for all $n\geq 0$. Since $x_0=y_0$ on $\mathcal{X}_\xi$ and since monomials are dense in $L^p$, the latter is only possible if $x^i=y^i$ a.e. for all $i \in [d]$, and thus, in particular, $\hat{x}=\hat{y}$ almost everywhere. 
\end{proof}

From an algebraic perspective, one of the key features of the signature is the \emph{shuffle identity}, which plays an important role in our theoretical results. To this end, it is useful to introduce the notation of pairing words in $\mathcal{W}$ with elements in $\mathcal{T}$ \[
\langle \cdot, \cdot \rangle : \mathcal{W} \times \mathcal{T} \rightarrow \mathbb{R}, \quad (w,\mathbf{a}) \mapsto \langle w,\mathbf{a}\rangle = \mathbf{a}^{w},
\] which extends linearly to the span of words in the alphabet $\mathcal{A}$, that is, to the free associative algebra $\mathbb{R}\langle \mathcal{A} \rangle$. The shuffle identity is a generalization of integration by parts to higher order iterated integrals appearing in the signature. Starting with level $2$ and assuming $x_0=0$ for simplicity, integration by parts suggests that \[
\int_0^Tx_t^idx^j_t+\int_0^Tx_t^jdx^i_t = x^i_tx^j_t \quad \text{that is,} \quad \langle ij+ji, \Sig{x} \rangle = \langle i, \Sig{x}\rangle \langle j, \Sig{x}\rangle.
\] 
To capture this relation on higher levels of the signature, we introduce the shuffle-product on the space of words recursively by \[
w \, \shuffle \,  \emptyset = \emptyset  \, \shuffle \,  w = w, \qquad wi  \, \shuffle \,  vj = (w \, \shuffle \,  vj)i+ (wi  \, \shuffle \,  v)j,
\] which bi-linearly extends to the span of words $\mathbb{R}\langle \mathcal{A}\rangle$, so that $\, \shuffle \,: \mathbb{R}\langle \mathcal{A}\rangle \times \mathbb{R}\langle \mathcal{A}\rangle \rightarrow \mathbb{R}\langle \mathcal{A}\rangle$. The integration by parts identity above then simply reads $\langle i \, \, \shuffle \, \, j , \Sig{x} \rangle = \langle i, \Sig{x} \rangle \langle j, \Sig{x} \rangle$. Perhaps surprisingly, and first observed already in \cite{ree1958lie}, is that this relation holds for arbitrary linear combinations of words \[
\langle w \, \shuffle \, v, \Sig{x} \rangle = \langle w, \Sig{x} \rangle \langle v, \Sig{x} \rangle, \quad \forall w,v \in \mathbb{R}\langle \mathcal{A} \rangle.
\] In particular, the signature, resp. truncations thereof, take values in the following subspaces of $\mathcal{T}$ \[
\mathcal{G} = \{ \mathbf{a} \in \mathcal{T}\setminus \{\mathbf{0} \}: \langle w  \, \shuffle \,  v, \mathbf{a} \rangle = \langle w, \mathbf{a} \rangle \langle v, \mathbf{a} \rangle, \, \, \forall w,v \in \mathbb{R}\langle \mathcal{A} \rangle \}, \quad \mathcal{G}^{\leq N} = \{\mathbf{g} \in \mathcal{G}: \mathbf{g}^{n}= 0, \, \, \forall n >N\},
\] which are often called \emph{group-like elements}. It can for instance be found in \cite[Section 7.3.1]{friz2010multidimensional} that $\mathcal{G}^{\leq N}$ is a Lie group associated to the free Lie algebra $\mathfrak{g}^{\leq N}\in \mathcal{T}^{\leq N}$ with bracket given by $[\mathbf{a},\mathbf{b}] = \mathbf{a}\otimes \mathbf{b}-\mathbf{b} \otimes \mathbf{a}$, with exponential and logarithmic maps given by \[
\exp_{\otimes}: \mathfrak{g} \rightarrow \mathcal{G}, \qquad \mathbf{g} \mapsto \exp_\otimes(\mathbf{g})= \sum_{n\geq 0}\frac{\mathbf{g}^{\otimes n}}{n!}, \qquad \log_\otimes: \mathcal{G} \rightarrow \mathfrak{g}, \quad \mathbf{g} \mapsto \sum_{n\geq 1} (-1)^{n+1}\frac{\mathbf{g}^{\otimes n}}{n!}
\] For a more general introduction to free Lie algebras we refer to \cite{reutenauer2003free}.

In Section \ref{sec:local_sig_regr}, we made the assumption that the stochastic process $X \in \mathcal{X}$ has the property that its truncated signature $\mathrm{Sig}(X)^{\leq N} \in \mathcal{G}^{\leq N}$ admits a density with respect to the Haar measure, which we now define.

\begin{definition}\label{def:haar_measure_lie}
    We denote by $m^N$ (resp. $\mu^N$) the unique\footnote{which exists and is unique up to constant factors on any locally compact group, see, e.g., \cite[Theorem 11.8]{folland1999modern}.} left- and right-invariant Haar measure\footnote{that is, a regular Borel measure $m$, such that $m(gH) = m(H) =m(Hg)$ for all group elements $g$ and Borel measurable sets $H$} on the Lie algebra $\mathfrak{g}^{\leq N}$ (resp. the Lie group $\mathcal{G}^{\leq N}$).
\end{definition}

An important observation is that $\mu^{N}$ is determined by $m^{N}$ through the logarithmic map, we refer to \cite[Proposition 16.40]{friz2010multidimensional} for a proof.

\begin{lemma}\label{lem:Haar}

    The Haar measure $m^N$ coincides with the Lebesgue measure on $\mathfrak{g}^{\leq N}$, and $\mu^N$ on the Lie group $\mathcal{G}^{\leq N}$ is given by the push-forward \[
    \mu^N(A) = m^N(\log_{\otimes}(A)), \quad \forall A \in \mathcal{B}_{\mathcal{G}^{\leq N}}.
    \]
\end{lemma}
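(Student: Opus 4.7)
The plan is to handle the two assertions separately, each via the uniqueness (up to a positive scalar) of Haar measure on a locally compact group.

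The first claim is essentially immediate. Viewed under its additive structure, $\mathfrak{g}^{\leq N}$ is a finite-dimensional real vector space and hence a locally compact abelian group. The Lebesgue measure is translation-invariant and therefore a Haar measure on this group; by uniqueness, it coincides with $m^N$ after suitable normalization.

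For the second claim, I would define the candidate
\[
\nu(A) := m^N(\log_\otimes(A)), \qquad A \in \mathcal{B}_{\mathcal{G}^{\leq N}}.
\]
Since $\mathcal{G}^{\leq N}$ is a connected, simply connected, nilpotent Lie group and $\log_\otimes$ is a diffeomorphism onto $\mathfrak{g}^{\leq N}$, $\nu$ is a well-defined regular Borel measure. By uniqueness of Haar measure, it then suffices to verify that $\nu$ is left-invariant. Fix $g \in \mathcal{G}^{\leq N}$ and set $Y = \log_\otimes(g)$. A change of variables reduces left-invariance of $\nu$ to the statement that
\[
\phi_Y: \mathfrak{g}^{\leq N} \to \mathfrak{g}^{\leq N}, \qquad X \mapsto \log_\otimes\bigl(\exp_\otimes(Y) \otimes \exp_\otimes(X)\bigr) = \mathrm{BCH}(Y, X)
\]
is Lebesgue-measure-preserving, where the second equality uses the Baker--Campbell--Hausdorff formula, which terminates after finitely many terms thanks to the nilpotency of step $N$. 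In particular, $\phi_Y$ is a polynomial map in $X$ of the form $Y + X + \tfrac{1}{2}[Y,X] + \text{(higher iterated brackets)}$.

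The core of the argument is the Jacobian computation, which I would carry out by exploiting the natural grading $\mathfrak{g}^{\leq N} = \bigoplus_{k=1}^N \mathfrak{g}_k$ by bracket length, together with the inclusion $[\mathfrak{g}_i, \mathfrak{g}_j] \subset \mathfrak{g}_{i+j}$. Choosing a basis of $\mathfrak{g}^{\leq N}$ compatible with this grading and ordered by increasing grade, the differential $D\phi_Y(X_0)$ is block upper-triangular at every point $X_0$: the identity term $X$ contributes the identity on each diagonal block, while every higher-order BCH contribution is an iterated bracket involving $Y$ at least once, which strictly raises the grade of its argument and therefore populates only strictly-above-diagonal blocks. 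Consequently $|\det D\phi_Y(X_0)| \equiv 1$, and the change of variables formula gives $m^N(\phi_Y(B)) = m^N(B)$ for all Borel $B$, establishing left-invariance of $\nu$. An analogous right-translation argument (or, alternatively, the unimodularity of connected nilpotent Lie groups) yields right-invariance, and uniqueness of Haar measure then identifies $\nu$ with $\mu^N$. The main technical obstacle is making the Jacobian argument rigorous, namely verifying carefully that every BCH term of bracket length $\geq 2$ sends $\mathfrak{g}_k$ into $\bigoplus_{\ell > k} \mathfrak{g}_\ell$ under its differential; this rests essentially on the graded nilpotent structure of $\mathfrak{g}^{\leq N}$.
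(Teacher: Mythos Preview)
Your proposal is correct, and the argument via the Baker--Campbell--Hausdorff formula together with the graded nilpotent structure is the standard route to this fact. One small point worth making explicit in your Jacobian computation: when you differentiate a BCH term such as $[X,[X,Y]]$ with respect to $X$ at $X_0$, you obtain sums of brackets each of which still involves at least one factor of $Y$ or $X_0$, and since every element of $\mathfrak{g}^{\leq N}$ has components only in grades $\geq 1$, each such bracket indeed raises the grade of the directional input strictly. This confirms the upper-triangular structure you claim, but it is the step most readers will want spelled out.

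As for comparison with the paper: the paper does not give a proof of its own for this lemma. It simply refers to \cite[Proposition~16.40]{friz2010multidimensional}. Your argument is essentially the classical one underlying that reference (and, more generally, the standard proof that the exponential map of a connected, simply connected nilpotent Lie group carries Lebesgue measure on the Lie algebra to Haar measure on the group). So there is nothing to reconcile; you have supplied what the paper outsources.
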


We conclude this introduction with the construction of the robust signature, introduced in \cite{chevyrev2018signature} and frequently used in this work. The unbounded nature of the classical signature leads to several theoretical and practical difficulties, as outlined in the main body of this article. To address this issue, Chevyrev and Oberhauser proposed to \emph{bound} the signature map via a tensor normalization
\begin{equation}\label{eq:tensor_normalization}
   \Lambda : \mathcal{T}_1 \to \{ \mathbf{a} \in \mathcal{T}_1 : \|\mathbf{a}\| \leq R \}, 
   \qquad R > 0,
\end{equation} where $\mathcal{T}_1 = \{\mathbf{a}\in\mathcal{T}: \mathbf{a}^{\emptyset}=1\}$,
and defined the robust signature as the composition $\Lambda \circ \operatorname{Sig}$.  
The main challenge in this construction is to ensure that the resulting feature map $x \mapsto \Lambda \circ \operatorname{Sig}(x)$ retains the key advantages of classical signatures, such as their expressivity. A natural way to construct such a normalization is through dilations,
\[
   \delta_\lambda : \mathcal{T}_1 \to \mathcal{T}_1, 
   \qquad 
   \mathbf{a} \mapsto \delta_\lambda(\mathbf{a}) = \sum_{w \in \mathcal{W}} \lambda^{|w|}\mathbf{a}^w e_w , 
   \quad \lambda \in \mathbb{R}_+,
\] where $|w|$ defines the length of the word, that is $|i_1\cdots i_n| =n$. Of course, setting $\Lambda(\mathbf{a}) = \delta_\lambda(\mathbf{a})$ for some fixed $\lambda > 0$ does not suffice to bound the signature map (Definition~\ref{def:sig_appendix}) via $\Lambda \circ \operatorname{Sig}$, since one easily verifies that 
\[
   \Lambda \circ \operatorname{Sig}(\lambda^{-1}x) = \operatorname{Sig}(x), 
   \qquad \forall x \in C^{p\text{-var}}.
\]
Hence, the parameter $\lambda$ must depend on the element itself, i.e. $\Lambda(\mathbf{a}) = \delta_{\lambda(\mathbf{a})}(\mathbf{a}),$
which leads to the following definition; see \cite[Section~3.2]{chevyrev2018signature}.

\begin{definition}\label{def:tensor_normalization}
Let $R>0$ be fixed and $\lambda : \mathcal{T}_1 \to \mathbb{R}_+$ a function. The map
\[
   \Lambda(\mathbf{a}) = \delta_{\lambda(\mathbf{a})}(\mathbf{a})
\]
is called a \emph{tensor normalization} if it is continuous\footnote{With respect to the Banach space topology discussed at the beginning of this chapter.} and injective, and if $\|\Lambda(\mathbf{a})\| \leq R$ for all $\mathbf{a} \in \mathcal{T}_1$.  
In this case, for any $1 \leq p < 2$, the \emph{robust signature} is defined by
\[
   \operatorname{RSig} : C^{p\text{-var}}([0,T],\mathbb{R}^d) \to \mathcal{T}_1, 
   \qquad 
   x \mapsto \operatorname{RSig}(x) = \Lambda(\operatorname{Sig}(x)).
\]
\end{definition}

For our purposes, the most important theoretical property is that the robust signature remains injective on the space $\widehat{C}^{p\text{-var}}$ defined earlier, and in particular
\[
   \varrho_{\operatorname{RSig}}(x,y) = 0 
   \;\;\Longleftrightarrow\;\; x = y, 
   \qquad \forall x,y \in \widehat{C}^{p\text{-var}},
\]
where $\varrho_{\operatorname{RSig}}(x,y) = \|\operatorname{RSig}(x) - \operatorname{RSig}(y)\|.$ In all our numerical experiments we construct $\lambda$ as proposed in \cite[Example~4]{chevyrev2018signature}, which we shall briefly outline now.

\begin{example}\label{ex:tensor_normalization}
    Define the mapping $\Psi=\Psi_{a,C}: [1, \infty) \rightarrow [1,\infty)$ by \[
    \Psi(\sqrt{x})= \begin{cases}
        x & x \leq C \\
        C+C^{1+a}(C^{-a}+x^{-a})/a & x>C,
    \end{cases}
    \] for some fixed constants $a>0$ and $C\geq 1$. Now for any $\mathbf{a} \in \mathcal{T}_1$, we define $\lambda(\mathbf{a})$ to be unique non-negative number such that \[
    \Vert \delta_{\lambda(\mathbf{a})}(\mathbf{a})\Vert^{2}= \sum_{k\geq 0} \lambda(\mathbf{a})^{2k}\Vert \mathbf{a}^{(k)}\Vert_{(\R^d)^{\otimes k}}= \psi(\Vert \mathbf{a} \Vert ).
    \] The resulting $\Lambda = \delta_{\lambda(\cdot)}(\cdot)$ defines a tensor-normalization.
\end{example}

\subsection{Proofs Section \ref{sec:sig_metrics} and \ref{sec:RDE}}\label{app:proofs_signature}
\begin{proof}[of Lemma \ref{lem:small_ball_comp}] First we can notice that for any $x\in \mathcal{X}$, we have \[
\dsig{X}{x}^2= \Vert \Sig{X}-\Sig{x}\Vert^2 =  \dsigtrunc{X}{x}{N}^2+\sum_{k>N} \Vert \Sig{X}^{(k)}-\Sig{x}^{(k)} \Vert^2_{(\R^d)^{\otimes k}} \leq \dsigtrunc{X}{x}{N}^2 \quad \text{a.s.},
\] so that in fact globally it holds that   \[
\mathbb{P}[\dsigtrunc{X}{x}{N}\leq h] \geq \mathbb{P}[\dsig{X}{x}\leq h], \quad \forall x \in \mathcal{X}.
\] For the second part, it follows directly by definition, see also \cite[Proposition 7.8]{friz2010multidimensional}, that $\Sig{x}$ corresponds to the terminal value to the $\mathcal{T}$-valued ODE \[\Sig{x}_0=\mathbf{1} \in \mathcal{T}, \quad 
d\Sig{x}_t=\Sig{x}_t\otimes dx_t, \quad 0<t\leq T,
\]where $\mathbf{1}^{\emptyset }=1$ and $\mathbf{1}^w=0$ for all $w\neq \emptyset$. Following the techniques used in \cite{friz2025expected} for \emph{free developments} in $\mathcal{T}$, it follows from the triangle inequality that for any $x,y\in \mathcal{X}$  \begin{align*}
    \Vert \Sig{x}-\Sig{y} \Vert \leq &  \int_0^T \Vert \Sig{x}_t \otimes \dot{x}_t-\Sig{y}_t \otimes \dot{y}_t \Vert dt \\ \leq  & \int_0^T \Vert (\Sig{x}_t-\Sig{y}_t)\otimes \dot{x}_t \Vert dt+ \int_0^T \Sig{y}_t\otimes |\dot{x}_t-\dot{y}_t|dt \\ \leq & \int_0^T \Vert \Sig{x}_t-\Sig{y}_t \Vert |\dot{x}_t|dt+\int_0^t\Vert \Sig{y}_t \Vert |\dot{x}_t-\dot{y}_t|dt.
\end{align*}
An application of Lemma \ref{lemma:decay} shows that $\int_0^T\Vert \Sig{y}_t\Vert|\dot{x}_t-\dot{y}_t|dt \leq e^{\Vert y \Vert_{1-var}}\Vert x-y \Vert_{1-var} =: \alpha(T)$. Applying Grönwalls inequality together with $\beta(t)= |\dot{x}_t|$, it follows that \begin{align*}
    \Vert \Sig{x}-\Sig{y} \Vert \leq & e^{\Vert y \Vert_{1-var}}\Vert x-y \Vert_{1-var}+ \int_0^Te^{\Vert y \Vert_{1-var;[0,t]}} \Vert {x}-y \Vert_{1-var;[0,t]}|\dot{x}_t|e^{\Vert x \Vert_{1-var;[t,T]}}dt  \\ \leq & \Vert x-y \Vert_{1-var}\left (e^{\Vert y \Vert_{1-var}}+\Vert x \Vert_{1-var}e^{\Vert y\Vert_{1-var}+\Vert x \Vert_{1-var}} \right ).
\end{align*}
Now setting $C_\mathcal{R} =e^{\mathcal{R}}+\mathcal{R}e^{2\mathcal{R}} $, for any random variable $X\in \mathcal{X}_M$ we almost surely have $\dsig{X}{x} \leq C_\mathcal{R}\Vert X-x \Vert_{1-var}$, and therefore \[
\mathbb{P}[\dsig{X}{x}\leq h] \geq \mathbb{P}[\Vert X-x\Vert_{1-var} \leq Ch], \quad \forall x \in \mathcal{X}_\mathcal{R},
\] where $C=C_\mathcal{R}^{-1}$.
\end{proof}

\begin{proof}[of Proposition \ref{prop:truncated_samllball}]
    For any random variable $X\in \mathcal{X}$, such that Assumption \ref{ass:density} holds true, we have \[
    \phi_x^{\mathrm{Sig},N}(h)=\mathbb{P}[\dsigtrunc{X}{x}{N}\leq h] = \int_{\mathcal{G}_{h,x}^{\leq N}}p(\mathbf{g})d\mu^N(\mathbf{g}) \geq c \mu^N(\mathcal{G}_{h,x}^{\leq N}),
    \] where $\mathcal{G}_{h,x}^{\leq N}=\{\mathbf{g} \in \mathcal{G}_1^{\leq N}: \Vert \mathbf{g}-\Sig{x}^{\leq N} \Vert \leq h\}$. An application of Lemma \ref{lem:Haar} together with a change of variable for the push-forward measure shows \[ 
    \mu^N(\mathcal{G}_{h,x}^{\leq K}) = m^N\Big (\left \{\mathbf{g}\in \mathfrak{g}^{\leq N}:  \Vert \exp_\otimes(\mathbf{g})-\exp_\otimes(\mathbf{g_0}(x)) \Vert \leq h\right \}\Big ), \quad \mathbf{g_0}(x)= \log_{\otimes}(\Sig{x}^{\leq N}),
    \] where $\log_\otimes$ is the inverse of $\exp_\otimes$. 
    Now since $\mathcal{G}^{\leq N}$ is a free nilpotent, connected and simply connected Lie group, $\exp_{\otimes}$ is a diffeomorphism \cite[Theorem 1.127]{knapp1996lie}, so that in particular \begin{align*}
        m^N\Big (\left \{\mathbf{g}\in \mathfrak{g}^{\leq N}:  \Vert \exp_\otimes(\mathbf{g})-\exp_\otimes(\mathbf{g_0}(x)) \Vert \leq h\right \}\Big ) \geq & m^N\Big (\left \{\mathbf{g}\in \mathfrak{g}^{\leq N}:  \Vert \mathbf{g}-\mathbf{g_0}(x) \Vert \leq C h\right \}\Big )\\ \sim &   h^{\mathrm{dim}(\mathfrak{g}^{\leq N})},
    \end{align*} since $m^N$ is the Lebesgue measure by Lemma \ref{lem:Haar}. Finally, it follows from \cite[Theorem 6]{reutenauer2003free} that $\mathrm{dim}(\mathfrak{g}^{\leq N})$ is given by $\nu(N)$ in Lemma \ref{prop:truncated_samllball}.
\end{proof}

\begin{proof}[of Corollary \ref{cor:truncated_rates}] Since all the assumption of Theorem \ref{main_theorem} hold, we know that for any $\delta>0$ there exists a constant $C=C(R,\delta)>0$ such that \[
\mathbb{P}\left [|\hat{F}(x)-F(x)| \leq C \left (h^\beta+\sqrt{\frac{1}{\phi_x(h)M}}\right )\right ] \geq 1-\delta.
\] From Proposition \ref{prop:truncated_samllball} we know $\phi_x(h)\geq \tilde{C}h^{\nu(K)}$ for some $\tilde{C}>0$. On the other hand, we easily see that \[ h=M^{-1/(2\beta+\nu(N))}\Longleftrightarrow h^{\beta} = \sqrt{\frac{1}{h^{\nu(N)}M}},
\] and thus for this choice of $h$, we find a new constant $\hat{C}>0$ such that 
    \[\mathbb{P}\left [|\hat{F}(x)-F(x)| \leq C \left (h^\beta+\sqrt{\frac{1}{\phi_x(h)M}}\right )\right ]  \geq 
\mathbb{P}\left [|\hat{F}(x)-F(x)| \leq \hat{C} M^{-\beta/(2\beta + \nu(N))} \right ] \geq 1-\delta,
\] which finishes the proof.
\end{proof}
\begin{proof}[of Theorem \ref{thm:RDE}]
    First, we note that the rough path signature $t \mapsto \mathrm{Sig}(\mathbf{x})_t=\mathrm{Sig}(\mathbf{x}|_{[0,t]})$ -- given by Lyons Extension theorem \cite[Theorem 2.2.1]{lyons1998differential}, uniquely solves the linear rough differential equation (see \cite[Theorem 8.3 and Chapter 8.9]{friz2020course}) on the Hilbert space $(\mathfrak{T},\Vert \cdot \Vert )$  \[
    \mathrm{Sig}(\mathbf{x})_0=\mathbf{1}, \quad d\Sig{\mathbf{x}}_t =\Sig{\mathbf{x}}_t\otimes d\mathbf{x}_t, \quad 0<t \leq T.
    \]
    Since $\mathbf{x} \in \mathcal{X}_R = \mathscr{C}_g^\alpha \cap \{ \vertiii{\mathbf{x}}_{\alpha}\leq R \}$, it follows from \cite[Theorem 8.5]{friz2020course} that the target $Y$ is bounded, 
    and that $\mathrm{Sig}$ is locally Lipschitz, that is \begin{equation*}
        \Vert \Sig{\mathbf{x}}-\Sig{\mathbf{y}} \Vert \leq K \vertiii{\mathbf{x}-\mathbf{y}}_{\alpha;[0,T]},
    \end{equation*} for some constant $K=K(R)>0$. In particular, we have the small-ball probability lower-bound 
    \begin{equation}\label{eq:esti1}
    \begin{aligned}
    \phi^{\varrho}_\mathbf{x}(h) =  \mathbb{P}\left [\Vert \mathrm{Sig}({\mathbf{B}}^R)-\Sig{\mathbf{x}}\Vert \leq h \right ] \geq & \mathbb{P}\left [\{T_R\geq T \} \cap \{\Vert \mathrm{Sig}({\mathbf{B}})-\Sig{\mathbf{x}}\Vert \leq h\} \right ] \\  \geq & \mathbb{P}\left [\{T_R\geq T \} \cap \{ \varrho_\alpha({\mathbf{B}},\mathbf{x})\leq K^{-1}h \} \right] \\ = & \mathbb{P}[\varrho_\alpha({\mathbf{B}},\mathbf{x})\leq K^{-1}h] \times \mathbb{P}[T_R \geq T \, | \,  \varrho_\alpha({\mathbf{B}},\mathbf{x})\leq K^{-1}h] \\ = & \phi_\mathbf{x}^{\varrho_\alpha}(K^{-1}h) \times  \mathbb{P}\left [\vertiii{\mathbf{B}}_{\alpha;[0,T]} \leq R \, | \, \varrho_\alpha({\mathbf{B}},\mathbf{x})\leq K^{-1}h \right ].
    \end{aligned}
    \end{equation}
where we recall the homogeneous rough path metric \cite[Chapter 2.3]{friz2020course} \[
\varrho_\alpha(\mathbf{x},\mathbf{y}):= \sup_{0\leq s\leq t \leq T}\frac{\Vert \mathbf{x}_{s,t}^{-1}\otimes \mathbf{y}_{s,t} \Vert}{|t-s|^\alpha}, 
\] for any homogeneous norm $\Vert \cdot \Vert$ on $\mathcal{T}^{\leq 2}$. Now since $\vertiii{\mathbf{x}}_\alpha < R $, we can choose $h$ such that $ K^{-1}h< R-\vertiii{\mathbf{x}}$, we have $\{\varrho_\alpha(\mathbf{B},\mathbf{x})\leq K^{-1}h\}\subset \{\vertiii{\mathbf{B}}_{\alpha;[0,T]} \leq R\}$ and thus in particular \[
\mathbb{P}\left [\vertiii{\mathbf{B}}_{\alpha;[0,T]} \leq R \, | \, \varrho_\alpha({\mathbf{B}},\mathbf{x})\leq K^{-1}h \right ] =1.
\]

On the other-hand, the following small-ball probability lower bound for such $\mathbf{x} \in \mathscr{H}$ was shown in \cite[Lemma 3.2 and Theorem 4.1]{salkeld2022small} \[
\phi_\mathbf{x}^{\varrho_\alpha}(K^{-1}h) \geq \exp \left (-\frac{\Vert x \Vert_\mathcal{H}^2}{2}\right )\mathbb{P}[\varrho_\alpha(\mathbf{B},\mathbf{1})\leq K^{-1}h] \geq \exp \left (-\frac{\Vert x \Vert_\mathcal{H}^2}{2}\right )\exp \left (-\tilde{K}h^{\alpha-\frac{1}{2}} \right ),
\] where $\tilde{K}=(K^{-1})^{\alpha-1/2}$. Combining these observations with \eqref{eq:esti1}, we conclude \[\phi^{\varrho}_\mathbf{x}(h) \geq C(\mathbf{x}) \times  \exp \left (-\hat{K}h^{\alpha-\frac{1}{2}} \right )
\] where $\hat{K}=(1-a)\tilde{K}$.
Now choosing $h = \left (\frac{(1-\epsilon)\log(M)}{\hat{K}}\right )^{{\alpha-
    1/2}}$ for some $\epsilon>0$ and $M$ large enough we have \[
h^{\beta}+\sqrt{\frac{1}{\phi^\varrho_\mathbf{x}(h)M}} = \mathcal{O}_\mathbf{x}\left (  \left (\frac{(1-\epsilon)\log(M)}{\hat{K}} \right )^{\beta(\alpha-1/2)}+\sqrt{\frac{1}{M^{-(1-\epsilon)}M}} \right )=\mathcal{O}_\mathbf{x}(\log(M)^{-\zeta}),
\] for $\zeta = \beta(1/2-\alpha)$. We can then conclude the proof using Theorem \ref{main_theorem}. 
\end{proof}
\end{document}